\newcommand{\A}{\mathcal{A}}
\newcommand{\M}{\mathcal{M}\mathcal{G}}
\renewcommand{\S}{\mathcal{S}}
\renewcommand{\A}{\mathcal{A}}
\newcommand{\G}{\mathcal{G}}
\newcommand{\F}{\mathcal{F}}
\renewcommand{\L}{\mathcal{L}}
\newcommand{\N}{\mathcal{N}}
\newcommand{\R}{\mathbb{R}}
\newcommand{\E}{\mathbb{E}}
\newcommand{\Pe}{\mathbb{P}}
\renewcommand{\P}{\mathcal{P}}
\newcommand{\Z}{\mathbb{Z}}
\newcommand{\norm}[1]{\left\|#1\right\|}
\newcommand{\bre}{\mathrm{br}}
\newcommand{\dist}{\operatorname{dist}}
\newtheorem{theorem}{Theorem}[section]
\newtheorem{definition}{Definition}[section]
\newtheorem{lemma}{Lemma}[section]
\newtheorem{proposition}{Proposition}[section]
\newtheorem{remark}{Remark}[section]
\newcommand{\titolo}{Finite-sample Guarantees for Nash Q-learning with Linear Function Approximation}
\newcommand{\algname}{Nash Q-learning with optimistic value iteration}
\newcommand{\algbrev}{NQOVI}
\newcommand\blfootnote[1]{%
  \begingroup
  \renewcommand\thefootnote{}\footnote{#1}%
  \addtocounter{footnote}{-1}%
  \endgroup
}
\title{
\titolo
}
\author{Pedro Cisneros-Velarde${}^{1}$\quad\quad Sanmi Koyejo${}^{2}$ \\\\
$^{1}${University of Illinois at Urbana-Champaign}\\
$^{2}${Stanford University, Google Research}
}
\date{}
\begin{document}
\maketitle

\begin{abstract}
Nash Q-learning may be considered one of the first and most known algorithms in multi-agent reinforcement learning (MARL) for learning policies that constitute a Nash equilibrium of an underlying general-sum Markov game. 
Its original proof provided asymptotic guarantees and was for the tabular case. Recently, finite-sample guarantees have been provided 
using more modern RL techniques for the tabular case.  
Our work analyzes Nash Q-learning 
using 
linear function approximation -- a representation regime introduced when the state space is large or continuous -- and provides finite-sample guarantees that indicate its sample efficiency. We find that the obtained performance nearly matches an existing efficient result for single-agent RL under the same representation and has a polynomial gap when compared to the best-known result for the tabular case. 
\end{abstract}

\blfootnote{Emails: Pedro Cisneros-Velarde (pacisne@gmail.com) and Sanmi Koyejo (sanmi@cs.stanford.edu).}


\section{Introduction}
\label{sec:introduction}

Multi-agent reinforcement learning (MARL) has been successfully applied to a diversity of problems, such as solving the games of Go~\citep{alphaGo:16, alphaZero:17} and Starcraft~\citep{vinyals2019grandmaster}, coordination of unmanned aerial vehicles~\citep{pham2018cooperative}, autonomous driving~\citep{Dinneweth2022SurveyCar}, power systems~\citep{foruzan18Microgrid}, and management of water and energy resources~\citep{Ni2014WaterResource,Lingxiao2020Energy}. The theory and development of 
multi-agent reinforcement learning algorithms is currently a prolific area, as attested by various recent surveys on the field, e.g.,~\citep{zhang2021multi,HernandezLeal2019Survey,Yang2021Survey}. In general, employing MARL to solve for a Nash equilibrium general-sum Markov game is computationally complex~\citep{Daskalakis2019Complexity}. This motivated theoretical works to look for other weaker solution concepts (e.g., coarse-correlated equilibria), or, if looking for a Nash equilibrium, either: (i) leave the general-sum domain and focus on zero-sum games or fully cooperative games, or (ii) specify extra conditions for the underlying general-sum Markov game (MG)~\citep{zhang2021multi}. The seminal work~\citep{Hu2003NashQ} introduced the \emph{Nash Q-learning} algorithm in the context of infinite-horizon discounted Markov games. The idea of Nash Q-learning 
is that, at every time step, each agent needs to find a Nash equilibrium which solves some static game whose utilities or rewards are defined 
by 
the (estimates of the) Q-functions of all the 
agents -- this is also called a \emph{stage game}. In~\citep{Hu2003NashQ}, asymptotic learning guarantees are given when the chosen Nash equilibrium is consistent in all stage games and is either a \emph{global optimal} or a \emph{saddle} one. 
Despite this strong sufficient condition,~\cite{Hu2003NashQ} present numerical examples where Nash Q-learning solves games that do not satisfy such conditions. 
It is important to remark that there exist proven cases in which value-based methods -- encompassing Nash Q-learning -- cannot converge to a single Nash equilibrium of general-sum Markov games~\citep{Zinkevich2005CyclicEquilibria}. However, remarkably, Nash Q-learning stands as one of the few general-sum MARL algorithms and has elicited the development of algorithms specialized to other classes of Markov games or focused on other solution concepts. Further, it is still consistently cited in the applied literature~\citep{HernandezLeal2019Survey}.  

The first 
formal proof for Nash Q-learning  by~\cite{Hu2003NashQ} only provided formal guarantees for asymptotic convergence in the tabular setting. However, recently, about two decades later, ~\cite{Liu2021SharpSelfPlay} proposed a type of Nash Q-learning algorithm and used a modern approach 
from the theoretical reinforcement learning (RL) literature to establish finite-sample guarantees and thus guarantee the sample efficiency of learning in the tabular setting. ~\cite{Liu2021SharpSelfPlay} used regret as a performance metric, and thus it was of interest that the average performance of policies gets closer to the performance of a Nash equilibrium instead of an actual convergence to a single equilibrium.

In the modern RL literature, it is known that tabular approaches are not ideal in environments where the state space is large or continuous. This has motivated the development of \emph{linear function approximation}, where, for example, the transition kernel and reward function of the underlying Markov decision process (MDP) are a linear function of a vector of features~\citep{CJ-ZY-ZW-MIJ:20, Yang2020RL}.

Taken together, this prior work motivates the central question of our paper:  

\emph{Can we obtain finite-sample guarantees and sample efficiency for Nash Q-learning in the linear function approximation regime?} 

We answer this question positively by proposing a Nash Q-learning algorithm -- called \emph{\algname} (\algbrev) -- and providing its finite sample guarantees under a regret performance metric. Interestingly, we find that the sample efficiency of our algorithm nearly matches the one reported in~\citep{CJ-ZY-ZW-MIJ:20} for (single-agent) RL in the same approximation regime.  

In general, our central question is also motivated from the fact that an increasing number of works providing sample efficient guarantees for (single-agent) RL problems has appeared in recent years.
The works~\citep{CJ-ZY-ZW-MIJ:20, Yang2020RL} started providing such guarantees in the linear function approximation domain using the principle of \emph{optimism} under uncertainty for \emph{online} RL -- later, other works have applied it to \emph{reward-free} RL (e.g.~\citep{RW-SD-LY-RS:20}) and have even applied a counterpart principle, called \emph{pessimism}, to \emph{offline} RL~\citep{Jin2021Pevi}. 
Optimism consists of adding a bonus so that the estimated optimistic Q-function rewards more those state-action pairs that have been less explored. 
Pessimism basically does the opposite by subtracting a bonus value. However, to the best of our knowledge, the simultaneous application of optimism and pessimism to achieve sample efficiency for learning Nash equilibria (in online MARL) has mainly been limited to two-player zero-sum games in the linear function approximation case~\citep{SQ-JY-ZW-ZY:22}, and to general-sum games in the tabular case~\citep{Liu2021SharpSelfPlay}. In this work, we show that the principle of optimism can easily be applied to Nash Q-learning in general-sum games.

\paragraph{Contributions} We summarize our contributions.
\begin{itemize}
    \item We provide the first sample efficient guarantees for a Nash Q-learning algorithm in the linear function approximation regime for general-sum games -- obtaining a regret bound $\tilde{\cO}(\sqrt{Kd^3H^5})$, with $K$ being the number of episodes, $H$ the episode length, and $d$ the dimension of the feature vector of the linear function approximation.
    \item To prove our guarantees, we propose the \algname~(\algbrev) algorithm.
    The original Nash Q-learning proposed by~\cite{Hu2003NashQ} was in the context of tabular and discounted MGs, and considered convergence to a Nash equilibrium as a performance metric. In contrast, we consider episodic MGs with regret performance, and do no need the existence of special Nash equilibria on the stage games as in~\cite{Hu2003NashQ}.
    \item When directly transforming it to the tabular case, our performance bound has a polynomial gap on all factors except for the number of episodes $K$ compared to the best-known result by~\cite{Liu2021SharpSelfPlay}. 
    \item 
    In the single agent case, our \algbrev ~algorithm collapses to the model-free RL algorithm proposed by~\cite{CJ-ZY-ZW-MIJ:20} (instead of taking a (mixed) Nash equilibrium at each stage game, the agent takes the optimal greedy action). Remarkably, we show that our algorithm's sample efficiency differs only by a factor of $H$ -- the length of the episode -- compared to the single agent one. To the best of our knowledge, this is the first time a general-sum MARL algorithm nearly matches the sample efficiency of an RL algorithm. 
\end{itemize}

\subsection{Related Works}

Since our paper is of a theoretical nature, we limit ourselves to presenting prior work focused on theory.

{\par \textbf{Multi-agent RL (MARL)}} 
Although the applied MARL literature has been around for decades, theoretical works have been gaining more presence in recent years -- we refer the reader to the recent surveys~\citep{zhang2020model,HernandezLeal2019Survey,Yang2021Survey}. 
Importantly, we highlight that a large body of 
recent works have focused on the study of learning in
the two-player zero-sum Markov game case -- where one player tries to maximize the expected reward while the other tries to minimize it. One reason for its popularity is that it can be formulated as a minimax game and Nash equilibria are easily characterized~\citep{zhang2021multi}. Recent works have been done both in the tabular setting, e.g.,~\citep{kozuno2021learning,zhang2020model,Bai2020SelfPlay,Liu2021SharpSelfPlay,jin2022vlearning}, and the linear function approximation setting, e.g.,~\citep{chen2022almost,Cisneros-Velarde2022OnePolicyEnough,SQ-JY-ZW-ZY:22}. 
In the case of general-sum Markov games, another large body of work has focused on providing guarantees for finding other solution concepts such as coarse correlated equilibria (CCE); e.g.,~\citep{Liu2021SharpSelfPlay,jin2022vlearning,Mao202REfficientRLGeneralSum}. Minimax sample optimality 
has been shown -- under certain assumptions -- for finding CCE in general-sum games and Nash equilibria in zero-sum games for the tabular case~\citep{li2022minimaxoptimal}. 
In learning Nash equilibria, \cite{Liu2021SharpSelfPlay} proposed a Nash Q-learning algorithm for general-sum games in the tabular setting, with an underlying episodic MG -- no extra conditions on the Nash equilibria are required.
While writing our paper we found the recent preprint by~\citep{chengzhuo2022RLGSMG} which studies representation learning in general-sum games 
and their proposed algorithms 
output a policy after a number of episodes. 
They focus on the harder problem of learning the feature vector of the linear approximation, whereas we assume it is given -- we only focus on learning a good policy and not on learning a good representation. 
Thus our guarantees are not directly comparable. 
Finally, we remark that both~\cite{Liu2021SharpSelfPlay} and~\cite{chengzhuo2022RLGSMG} use the principle of optimism and pessimism, so they compute two Q-functions on their algorithms, while we compute just one optimistic Q-function.

{\par \textbf{Linear function approximation in RL}} The idea of using linear function approximation is ubiquitous in theoretical RL. The first works to combine it with 
optimism for sample efficient learning were~\citep{CJ-ZY-ZW-MIJ:20, Yang2020RL} for online RL. 
Since then, such setting has been adapted to different RL problems, such as representation learning (of the feature vector of the linear approximation), e.g.~\citep{agarwal2020flambe}; parallel learning (multiple agents learning through independent MDPs but being able to communicate their experience), e.g.,~\citep{dubey2021provably}; 
deployment efficiency (RL algorithms when the number of times a policy can be updated is restricted), e.g.,~\citep{MG-RX-SSD-LFY:21}; 
reward-free RL (where exploration and exploitation are separated in different learning stages), 
e.g.,~\citep{RW-SD-LY-RS:20,AW-YC-MS-SSD-KJ:22}. 
Some works have combined two of the aforementioned problems 
using 
linear function approximation; e.g. in the context of reward-free RL, ~\citet{huang2021deployment} studied deployment efficiency, whereas~\citet{Cisneros-Velarde2022OnePolicyEnough} studied the effect of parallel exploration. 
These works follow a similar skeleton in their algorithms since all of them have in common the use of optimism and value iteration -- it is in this framework that we decided to propose an algorithm based on Nash Q-learning.

The paper is organized as follows. In Section~\ref{sec:preliminaries}, we formally introduce the setting. In Section~\ref{sec:NashQ-analysis}, we introduce our Nash Q-learning algorithm and state our main result. In Section~\ref{sec:NashQproof}, we provide a sketch of the proof and some nuances of its formal analysis. Section~\ref{sec:conclusion} is the conclusion.
\subsection{Notation} 
Let $\|\cdot\|$ be the Euclidean norm, and $\|v\|_A = \sqrt{v^TAv}$ for positive semidefinite matrix $A$. 
Let $[k]=\{1, 2, \dots, k\}$ for a positive integer $k$. Let $I_m$ be the $m \times m$ identity matrix. Let $\Delta(\A)$ be the probability simplex defined on a given finite set $\A$. 
%
Given the big-O complexity notation $\cO$, we use $\tilde{\cO}$ to hide polylogarithmic terms in the quantities of interest.


\section{Preliminaries}
\label{sec:preliminaries}

We consider an episodic Markov Game (MG) of the form $\M=(\S,\A,H,\P,r,)$, with state space $\S$, action space $\A=\A_1\times\dots\A_n$ with $\A_i$ being the action space for agent $i\in[n]$, $H$ is the number of steps per episode or episode length -- we assume the non-bandit case $H\geq 2$, $\P = \{\P_h\}_{h\in[H]}$ are transition probability measures and $\P_h(\cdot\mid x,a)$ denotes the transition kernel over $h+1$ if all players take the action profile $a\in\A$ for state $x\in\S$. 
We denote agent $i$'s reward function profile by $r_i = \{ r^i_h\}_{h=1}^H$ with $r^i_h: \cS \times \A_i \to [0,1]$.\footnote{We assume deterministic rewards for simplicity.} 
For any agent $i\in[n]$, its action taken at step $h$ is denoted by $a_{i,h}\in\A_i$, 
and let 
$a_i = \{ a_{i,h}\}_{h=1}^H$. 
We assume every agent has a finite action space, while the state space can be arbitrarily large or even continuous. 

We denote agent $i$'s policy by $\pi_i = \{ \pi_{i,h}\}_{h=1}^H$ with $\pi_{i,h}: \cS \to \Delta(\A_i)$. 
With some abuse of notation, we also let $\pi_h:\S\to\Delta(\A)$ denote the (joint) policy taken by the agents over the joint action space at time step $h\in[H]$ -- the subindex $k$ in $\pi_k$ will be clear from the context whether it refers to an agent or a time step. 
Let $\pi$ be the joint policy of all agents. 
We say $\pi$ is a product policy (across agents) $\pi=\pi_1\times\dots\times\pi_n$ when, conditioned on the same state, the action of every agent can be sampled independently according to their own policy, i.e., $\pi_h(x)\in \Delta(\A_1)\times\dots\times\Delta(\A_n)$ for every $x\in\S$, $h\in[H]$.
For agent $i$, we define her value function $V_h^{i,\pi}: \S \to \R$ at the $h$-th step as $V_h^{i,\pi}(x) = \E_{\pi}[\sum_{h'=h}^H r_{h'} (s_{h'}, a_{h'})  \given s_h = x]$ and her Q-function or action-value function $Q_h^{i,\pi}: \S \times \A \to \R$ as $Q_h^{i,\pi}(x,a):=\E_{\pi}[ \sum_{h'=h}^H r_{h'}(s_{h'}, a_{h'}, b_{h'} )  \given s_h = x, a_h = a]$, where the expectation $\mathbb{E}_{\pi}$ is taken with respect to both the randomness in the transitions $\P$ and the randomness inherent in the policy $\pi$. If agent $i$ has policy $\nu$ and the rest of the agents joint policy $\pi_{-i}$, we denote its associated value function at step $h$ by the notation $V_h^{i,\nu,\pi_{-i}}$, i.e., by placing a superscript with $i$'s policy before the (joint) policy of the rest of the agents; consequently,  $V_h^{i,\pi_i,\pi_{-i}}=V_h^{i,\pi}$. 

We now define our solution concept of interest.

\begin{definition}[Nash equilibrium for Markov games]
\label{def:NashEq-MG}
Given an initial state $s_o\in\cS$, a product policy profile $\pi^*$ is called a \emph{Nash equilibrium} (NE) if $V^{i,\pi^*_i,\pi^*_{-i}}_1(s_o)\geq V^{i,\pi_i,\pi^*_{-i}}_1(s_o)$ for any $i\in[n]$ and any policy $\pi_i$, and it is called an \emph{$\epsilon$-NE} if $V^{i,\pi^*_i,\pi^*_{-i}}_1(s_o)- V^{i,\pi_i,\pi^*_{-i}}_1(s_o)\leq\epsilon$. 
\end{definition}
We say agent $i\in[n]$ plays a \emph{best response} policy against the policy profile $\pi_{-i}$ of the rest of the agents according to  $\bre_i(\pi_{-i})\in\argmax_{\nu} V_h^{i,\nu, \pi_{-i}}(x)$ for any $(x,h)\in\S\times[H]$. Note that we can easily characterize a Nash equilibrium (Definition~\ref{def:NashEq-MG}) using best-responses.

For any function $f: \mathcal{S} \to \mathbb{R}$, we define the transition operator as $(\mathbb{P}_{h}f)(x, a) =\E_{x'\sim\P_h(\cdot|x,a)}[f(x')]$ and the Bellman operator as $(\mathbb{B}_{h}f)(x, a) = r_{h} +  (\mathbb{P}_{h}f)(x, a)$ for each step $h \in [H]$. For any $i\in[n]$, the Bellman equation associated with a policy $\pi$ is: $Q^{i,\pi}_h(x,a)=(r_h^i(x,a)+\Pe_h V^{i,\pi}_{h+1})(x,a)$, $V^{i,\pi}_h(x)= \E_{a\sim \pi_h(x)}[Q^{i,\pi}_h(x,a)]$, with $V^{i,\pi}_{H+1}(x)=0$, for any $(x,a)\in\S\times\A$.

In this paper, we consider linear MGs.

{\par \textbf{Linear (function approximation in) Markov Games.}} Under a linear MG setting, there exists a known feature map $\phi: \mathcal{S} \times \mathcal{A} \to \mathbb{R}^d$ such that for every $h\in[H]$, there exist $d$ unknown (signed) measures $\mu_{h} = \left(\mu_{h}^{(1)}, \dots \mu_{h}^{(d)}\right)$ over $\mathcal{S}$ and an unknown vector $\theta_{h} \in \mathbb{R}^d$ such that $\P_{h}(x'| x, a) = \langle \phi(x, a), \mu_{h}(x')\rangle$, $r_{h}(x, a) = \langle \phi(x, a), \theta_{h}\rangle$ for all $(x, a, x') \in \mathcal{S} \times \mathcal{A} \times \mathcal{S}$. We assume the non-scalar case with $d\geq 2$ and that the feature map satisfies $\|\phi(x, a) \|\leq 1$ for all $(x, a) \in \mathcal{S} \times \A$ and $\max\{\|\mu_{h}(\S)\|, \|\theta_{h}\|\} \leq \sqrt{d}$ at each step $h \in [H]$, where (with an abuse of notation) $\|\mu_{h}(\mathcal{S})\| = \int_{\mathcal{S}}\|\mu_{h}(x)\|dx$. Note that the transition kernel $\P_h(\cdot|x,a)$ may have infinite degrees of freedom since the measure $\mu_h$ is unknown.


{\par \textbf{Performance Metric.}} We consider that all agents are learning during a total of $K$ episodes, starting 
at some initial state $s_o \in \cS$ at the beginning of each episode. 
For a set of policies $\{\pi^{k}\}_{k\in[K]}$ provided by an online MARL algorithm, we use the following regret performance metric: 
\begin{equation}
    \label{eq:regret_nash}
    \textnormal{Regret}(K) = \sum_{k=1}^K\max_{i\in[n]} (V_1^{i,\bre(\pi^k_{-i}),\pi^k_{-i}}(s_o) - V_1^{i,\pi_i^k}(s_o)).
\end{equation}
The idea behind such regret is that, at episode $k\in[K]$, $\max_{i\in[n]} (V_1^{i,\bre(\pi^k_{-i}),\pi^k_{-i}}(s_o) - V_1^{i,\pi_i^k}(s_o))=0$ iff (product) policy $\pi^k$ is a Nash equilibrium for the Markov game.

{\par \textbf{Static games.}} We also consider that the $n$ agents can play a static game, keeping their respective action spaces. Given that each agent has an associated reward function $g_i:\A\to\R$ in a static game, the game is defined by the tuple  $(g_1,\dots,g_n)$. 
Given $a\in\A$, we define $a_{-i}$ as the respective element of $\A_{-i}=\A_1\times\dots \A_{i-1}\times\A_{i+1}\times\dots\times \A_n$. 
We consider a tuple $\nu=(\nu_1,\dots,\nu_n)$ with $\nu_i\in\Delta(\A_i)$, $i\in[n]$, to be a strategy profile; and let $\nu_{-i}$ be the tuple $\nu$ without its $i$th element. In this work, we consider strategy profiles as product measures $\nu(a)=\prod^n_{i=1}\nu(a_i)$ and so $\nu\in\Delta(\A)$; a similar consideration follows for $\nu_{-i}\in\Delta(\A_{-i})$, $i\in[n]$. A strategy profile $\nu^*$ is a Nash equilibrium if $\nu^*_i\in\argmax_{\nu\in\Delta(\A_i)}\E_{\substack{a_i\sim\nu\\a_{j}\sim\nu^*_{j}, j\in[n]\setminus\{i\}}}[g_i(a)]$ for every $i\in[n]$.

\begin{definition}[Global optimal and saddle Nash equilibria~\citep{Hu2003NashQ}]
\label{def:global-saddle}
A strategy profile $\nu^*$ of the static game $(g_1,\dots,g_n)$ is:
\begin{enumerate}[label=(\roman*)]
    \item a \emph{global optimal} (Nash) equilibrium if $\E_{a\sim\nu^*}[g_{i}(a)]\geq \E_{a\sim\nu}[g_{i}(a)]$ for any strategy profile $\nu\in\Delta(\A)$; and 
    \item a \emph{saddle} Nash equilibrium if $\E_{a\sim\nu^*}[g_{i}(a)]\geq \E_{\substack{a_i\sim\nu_i\\a_{-i}\sim\nu^*_{-i}}}[g_{i}(a)]$ for any $i\in[n]$ and any $\nu_i\in\Delta(\A_i)$, and $\E_{a\sim\nu^*}[g_{i}(a)]\leq \E_{\substack{a_i\sim\nu^*_i\\a_{-i}\sim\nu_{-i}}}[g_{i}(a)]$ for any strategy profile $\nu_{-i}\in\Delta(\A_{-i})$.
\end{enumerate}   
\end{definition}


\section{Nash Q-learning and its analysis}
\label{sec:NashQ-analysis}

We propose a simple Nash Q-learning algorithm based on linear function approximation and optimism named 
\emph{\algname} or \algbrev 
~as described in Algorithm~\ref{alg:main}.

We provide an outline of the \algbrev ~algorithm. At each iteration $k\in[K]$ and step $h\in[H]$, 
the information of the explored state-action trajectories described by 
the agents in the game at the same step but up to the previous episode is collected in a covariance matrix $\Lambda^k_h$ (line 6 of Algorithm~\ref{alg:main}). Then, all the agents participate in a static game described by some prior optimistic estimates of Q-functions 
and a Nash equilibrium is computed 
-- since this static game is solved in every episode and time-step (and depends on the current state of the Markov game), it is called a \emph{stage game}. Then each agent, using its computed Nash policy from the stage game, computes a new \emph{optimistic} estimate of the Q-function (line 10), using the optimism bonus $\beta(\phi(\cdot,\cdot)^\top(\Lambda^k_h)^{-1}\phi(\cdot,\cdot))^{1/2}$. Then, all the agents jointly explore the environment (lines 14-16) by taking actions coming from their respective policies computed from stage games. The resulting state-action trajectory across the episodes will then be collected  
and the whole process repeats. 

\begin{remark}[About 
information access in \algbrev]
In this paper, we are primarily concerned with analyzing \algbrev ~as a solver for the policies of the underlying Markov game,  e.g, as done in the recent work~\citep{Liu2021SharpSelfPlay}. One could think of relaxing some implementation details such as the information each agent has access to across episodes, but this is beyond the scope of the paper. For example, at each step $h\in[H]$ and iteration $k\in[K]$, 
one could make the optimistic Q-functions of $i\in[n]$, $Q_h^{i,k}$, be private information to the rest of the agents. Then, each agent would try to estimate the Q-functions of the rest of the agents based on the observation of 
past rewards -- an idea already outlined in~\citep{Hu2003NashQ}. 
\end{remark}

\begin{algorithm}[t!]
  \caption{\algname~ (\algbrev)}
  \label{alg:main}
\begin{algorithmic}[1]
    \STATE {\bfseries Input:} $K$, $\beta$, $\lambda$
    \FOR{episode $k\in[K]$}
        \STATE $x_1^{k}\gets s_0$ 
         \STATE $Q_{H+1}^{i,k}(\cdot,\cdot)\gets 0$, $i\in[n]$
        \FOR{$h=H,\dots,1$}
            \STATE $\Lambda_{h}^k\gets\lambda I_d + \sum^{k-1}_{\tau=1}\phi(x_h^{\tau},a_h^{\tau})\phi(x_h^{\tau},a_h^{\tau})^\top$
            \STATE $\pi^*\gets$ a  Nash Equilibrium for the $n$-player game $(Q^{1,k}_{h+1}(x^k_{h+1},\cdot),\dots,Q^{n,k}_{h+1}(x^k_{h+1},\cdot))$
            \FOR{$i\in[n]$}
            \STATE $w_{h}^{i,k}\gets(\Lambda_{h}^{k})^{-1} \sum^{k-1}_{\tau=1}\phi(x_h^{\tau},a_h^{\tau})$ $[r_h^i(x_h^\tau,a_h^\tau)+\E_{a\sim\pi^*}[Q^{i,k}_{h+1}(x_{h+1}^{\tau},a)]$
            \STATE $Q^{i,k}_{h}(\cdot,\cdot)\gets \min\{(w_{h}^{i,k})^\top\phi(\cdot,\cdot)+\beta(\phi(\cdot,\cdot)^\top(\Lambda_{h}^{k})^{-1}\phi(\cdot,\cdot))^{1/2},H\}$
            \ENDFOR 
        \ENDFOR
        \FOR{$h\in[H]$}
            \STATE $\pi^k_h(x^k_h)\gets$ a Nash Equilibrium for the $n$-player game $(Q^{1,k}_{h}(x^k_{h},\cdot),\dots,Q^{n,k}_{h}(x^k_{h},\cdot))$
            \STATE Take $a_h^k \sim \pi^k_h(x^k_h)$
            \STATE Observe $x_{h+1}^{k}$ 
        \ENDFOR
    \ENDFOR
\end{algorithmic}
\end{algorithm}

We now present the paper's main result.

\begin{theorem}[Performance of the \algbrev ~algorithm]
\label{thm:main-nashQ}
There exists an absolute constant $c_\beta>0$ such that, for any fixed $\delta\in(0,1)$, if we set $\lambda=1$ and $\beta=c_\beta dH\sqrt{\iota}$, with $\iota:=\log(dKH/\delta)$, then, with probability at least $1-(n+2)\delta$, 
\begin{equation}
\label{eqn:regret-res2}
\textnormal{Regret}(K)
\leq 
\cO\bigg(\sqrt{K}\sqrt{d^3H^5\iota^2}\bigg).
\end{equation}
\end{theorem}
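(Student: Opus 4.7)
The plan is to follow the standard optimism-based regret template from linear MDP analyses (cf.\ \citealp{CJ-ZY-ZW-MIJ:20}), adapted to the multi-agent Nash-equilibrium setting. The argument decomposes into three ingredients: (i) a linear-representation plus concentration lemma for the optimistic $Q$-estimates, (ii) an \emph{optimism lemma} that upper bounds each agent's best-response value (against the rest's Nash policies) by its own optimistic value, and (iii) a per-episode telescoping identity that converts the regret into a sum of elliptic bonuses controllable by the elliptic potential lemma.

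For (i), I would first exploit the linear MG assumption to show that $(\mathbb{P}_{h} V^{i,k}_{h+1})(x,a)$ is linear in $\phi(x,a)$ with coefficient bounded in norm by $\mathcal{O}(H\sqrt{d})$ (via $\|\mu_{h}(\S)\|\le\sqrt d$), so that $(\mathbb{B}_{h} V^{i,k}_{h+1})(x,a)=\phi(x,a)^\top \bar w^{i,k}_h$ for some $\bar w^{i,k}_h$. A vector-valued self-normalized concentration inequality, combined with a covering argument for the function class
\[
\V=\Bigl\{x\mapsto\E_{a\sim\nu^{*}(x)}\bigl[\min\{H,\phi(x,a)^\top w+\beta\|\phi(x,a)\|_{\Lambda^{-1}}\}\bigr]\Bigr\},
\]
where $\nu^{*}(x)$ denotes any Nash of the stage game induced by the $n$ component payoffs, would then yield
\[
\bigl|\phi(x,a)^\top w^{i,k}_h-(\mathbb{B}_h V^{i,k}_{h+1})(x,a)\bigr|\le\beta\,\|\phi(x,a)\|_{(\Lambda^k_h)^{-1}}
\]
uniformly over $(x,a,h,i,k)$ with probability $1-\delta$. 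I expect this covering step to be the main technical obstacle: single-agent value functions are pointwise maxima, whereas here $\nu^{*}(\cdot)$ is a Nash strategy, which may be discontinuous in the underlying payoffs. I would circumvent this by $\varepsilon$-covering the tuple $(w^{1,k}_h,\dots,w^{n,k}_h,\beta)$ jointly (adding only a factor of $n$ to the log-covering number) and exploiting the fact that, although the equilibrium \emph{strategy} may jump, the equilibrium \emph{value} is $1$-Lipschitz in the $\ell_\infty$ norm of the payoff perturbation by the variational characterization of the Nash value, so small changes in the parameters induce small changes in $V$.

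For (ii), I would prove by backward induction on $h$ that $V^{i,k}_h(x)\ge V^{i,\bre(\pi^k_{-i}),\pi^k_{-i}}_h(x)$ for all $x$, $i$, $k$. Given the inductive hypothesis, for any unilateral deviation $\nu_i$,
\[
Q^{i,\nu_i,\pi^k_{-i}}_h(x,a)=r^i_h(x,a)+(\mathbb{P}_h V^{i,\nu_i,\pi^k_{-i}}_{h+1})(x,a)\le(\mathbb{B}_h V^{i,k}_{h+1})(x,a)\le Q^{i,k}_h(x,a),
\]
where the first inequality uses the inductive hypothesis and monotonicity of $\mathbb{P}_h$, and the second uses concentration from (i) together with the truncation at $H$. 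The Nash property of $\pi^k_h(x)$ in the stage game with payoffs $(Q^{j,k}_h(x,\cdot))_j$ then gives
\[
V^{i,k}_h(x)=\E_{a\sim\pi^k_h(x)}[Q^{i,k}_h(x,a)]\ge\max_{\nu_i}\E_{a_i\sim\nu_i,\,a_{-i}\sim\pi^k_{-i,h}(x)}[Q^{i,k}_h(x,a)],
\]
which dominates the best-response value and closes the induction. Critically, this step uses only the defining unilateral-deviation Nash condition, so it bypasses the global-optimal/saddle assumptions on the stage-game equilibria required by \citet{Hu2003NashQ}.

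For (iii), let $i_k\in\arg\max_i(V^{i,\bre(\pi^k_{-i}),\pi^k_{-i}}_1(s_o)-V^{i,\pi^k}_1(s_o))$; by (ii) the $k$-th summand of the regret is at most $V^{i_k,k}_1(s_o)-V^{i_k,\pi^k}_1(s_o)$. Expanding along the on-policy rollout and using that $(\mathbb{B}_h V^{i,k}_{h+1})-Q^{i,k}_h$ lies in $[-2\beta\|\phi\|_{(\Lambda^k_h)^{-1}},0]$ on the encountered features by concentration and truncation, one obtains
\[
V^{i_k,k}_1(s_o)-V^{i_k,\pi^k}_1(s_o)\le\sum_{h=1}^H\Bigl(2\beta\,\|\phi(x^k_h,a^k_h)\|_{(\Lambda^k_h)^{-1}}+M^{i_k,k}_h\Bigr),
\]
where $M^{i_k,k}_h$ are bounded martingale differences. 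Summing over $k$, the elliptic potential lemma controls the bonus contribution, Azuma--Hoeffding with a union bound over $i\in[n]$ controls the martingale, and substituting $\beta=c_\beta dH\sqrt\iota$ yields the claimed $\tilde{\mathcal O}(\sqrt{Kd^3H^5\iota^2})$ after careful accounting of polylogarithmic factors.
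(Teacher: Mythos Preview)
Your overall plan mirrors the paper's: concentration for the optimistic $Q$-estimates via a covering argument, an optimism lemma $V^{i,k}_h\ge V^{i,\bre(\pi^k_{-i}),\pi^k_{-i}}_h$ proved by backward induction using only the unilateral Nash property, and a telescoping of the optimistic gap into elliptic bonuses plus martingale noise. Your steps (ii) and (iii) are essentially Lemma~\ref{lem:optimism_bound} and the decomposition~\eqref{eq:regret_prev_p} in the paper.

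The genuine gap is in step (i). Your assertion that ``the equilibrium value is $1$-Lipschitz in the $\ell_\infty$ norm of the payoff perturbation by the variational characterization of the Nash value'' is false for general-sum games: there is no such variational characterization, and Nash equilibrium \emph{values}---not just strategies---can be discontinuous in the payoffs. For instance, let player~$1$ have a single action with payoffs $(1,0)$ across player~$2$'s two actions, and let player~$2$'s own payoffs be $(\epsilon,0)$; then player~$1$'s unique Nash value jumps from $1$ to $0$ as $\epsilon$ crosses zero. Consequently, even a joint $\epsilon$-net over $(w^{1,k}_h,\dots,w^{n,k}_h,\Lambda^k_h)$ does not yield an $\epsilon$-cover of the induced $V$-functions, and the covering-to-concentration transfer fails.

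The paper sidesteps this by using that $\pi^k_{h+1}(x)$ is a Nash equilibrium of the stage game, so agent~$i$'s component is a best response to $\pi^k_{-i,h+1}(x)$, whence
\[
V^{i,k}_{h+1}(x)=\max_{\nu\in\Delta(\A_i)}\E_{a_i\sim\nu,\;a_{-i}\sim\pi^k_{-i,h+1}(x)}\bigl[Q^{i,k}_{h+1}(x,a)\bigr].
\]
Both the outer $\max$ and the inner expectation are non-expansive in $\|Q^{i,k}_{h+1}(x,\cdot)\|_\infty$, so---with $\pi_{-i}$ treated as a free parameter of the class $\mathcal{V}_i$---the covering reduces to covering only the pair $(w^{i,k}_{h+1},\Lambda^k_{h+1})$, yielding the same $d+d^2$ log-covering number as in the single-agent case (Lemma~\ref{lem:bound-import}) and hence $\beta=c_\beta dH\sqrt{\iota}$ with an absolute constant $c_\beta$. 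The resulting concentration events $\mathcal{E}_i$ are per agent; the union bound over $i\in[n]$ is precisely what produces the $1-(n+2)\delta$ probability in the statement, which your single ``probability $1-\delta$'' event would not match.

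A secondary difference in step (iii): rather than selecting the worst agent $i_k$ per episode and union-bounding the martingales over $i$, the paper introduces a single agent-independent pair $(\hat Q^k_h,\hat V^k_h)$ built purely from the bonus terms, proves $\max_i(V^{i,k}_h-V^{i,\pi^k}_h)\le\hat V^k_h$ by a short induction, and telescopes $\hat V^k_1(s_o)$ directly; the martingale increments $\xi^k_h,\delta^k_h$ then carry no agent index, so no union bound over $i$ is needed at the Azuma step.
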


{\par \textbf{Sample efficiency.}} Our regret bound is sublinear in the number of episodes $K$ and -- ignoring logarithmic terms -- polynomial on the parameters $d$ and $H$, 
i.e., there is learning with sample efficiency. Our finite-sample guarantee states that $K=\tilde{\cO}\left(\frac{d^3H^5}{\epsilon^2}\right)$ episodes are needed in order to achieve an average regret less or equal than $\epsilon$, i.e., for the policies across the episodes to perform on average as an $\epsilon$-Nash equilibrium. 

{\par \textbf{About the number of agents.}} 
Our bound has a logarithmic dependence on the number of agents $n$. However, we remark that the feature dimension $d$ of the linear MG \emph{might} hide dependencies on $n$ depending on how the feature vector $\phi$ is constructed (more on this below, when discussing the tabular case).
In any case, the larger the number of agents, the more samples are needed to achieve the same average regret performance.
Intuitively, this makes sense, since increasing the number of agents increases the number of possible decision makers and thus the complexity of the state-action space to be sampled. This is in stark contrast with other works in the single-agent RL case where multiple agents can be deployed to explore the \emph{same} state-action space of the MDP, in which case their performance measure improves with the number of agents~\citep{Cisneros-Velarde2022OnePolicyEnough}.

{\par \textbf{Comparison with (single-agent) RL.}} For the classic single-agent RL case ($n=1$), \cite{CJ-ZY-ZW-MIJ:20}  obtained, with the regret metric with respect to the optimal policy of the underlying MDP, the bound $\tilde{\cO}(\sqrt{K}\sqrt{d^3 H^4})$. Thus, our result is larger 
by a factor $H$ -- thus nearly-matching the sample efficiency. Having to learn a Nash equilibrium of an MG thus requires more samples than what would be necessary for an MDP. It is important to highlight that though the single-agent case requires taking an action that maximizes the optimistic Q-function (see~\citep[LSVI-UCB 
Algorithm]{CJ-ZY-ZW-MIJ:20}), \algbrev ~requires solving for Nash equilibrium and thus is computationally more complex. 

{\par \textbf{Comparison with~\citep{Hu2003NashQ}.}} The original Nash Q-learning proposed by~\cite{Hu2003NashQ} has as its performance metric the convergence to a Nash equilibrium of the underlying discounted MG. In order to ensure such convergence, they assumed the existence of either global optimal or saddle Nash equilibria uniformly on every stage game -- see Definition~\ref{def:global-saddle}. In contrast, since we use regret in the context of episodic MGs, we are interested in the average performance of the computed policies across iterations, with the expectation that it will approximate a Nash equilibrium performance. Therefore, we are not strictly interested in convergence to a \emph{single} Nash equilibrium. For this reason, our proof makes no use of the assumptions across stage games by~\cite{Hu2003NashQ}. Their work and ours, though being model-free, use completely different proof techniques.

{\par \textbf{Comparison with~\citep{Liu2021SharpSelfPlay}.}} The first Nash Q-learning algorithm in~\citep{Hu2003NashQ} was designed and analyzed for tabular RL. Motivated by concerns of large or continuous state spaces, we decided to opt for the function approximation regime. As it is known in the literature, a direct translation of the \algbrev ~algorithm to the tabular case can be done by letting the feature vector $\phi$ capture $d=|\cS||\A|=|\cS|\prod^n_{i=1}|\A_i|$, which would give our regret bound a complexity of $\tilde{\cO}(\sqrt{H^5|\S|^3|(\prod^n_{i=1}|\A_i|)^3K})$. In the tabular case,~\cite{Liu2021SharpSelfPlay} proposed the \emph{Multi-Nash-VI} algorithm which obtains $\tilde{\cO}(\sqrt{H^4|\S|^2|(\prod^n_{i=1}|\A_i|)K})$ -- tighter in horizon $H$ and both sizes of the state and action spaces of the agents. Interestingly, in the tabular case, both \algbrev ~and Multi-Nash-VI 
are of different nature, 
since the former is model-free and the latter model-based. Interestingly as well, Multi-Nash-VI requires the computation of two Q-functions based on the constructed model -- one using optimism and another using pessimism --, whereas \algbrev ~requires only the computation of an optimistic Q-function. 
As generally expected in general-sum MGs,  
both suffer from the
\emph{curse of multi-agents} in the tabular case
since the sample bounds have exponential dependence on the number of agents (through the product of the cardinality of the agents' action spaces)~\citep{song2022whenlearningGenSum}.

\section{Proving the main result}
\label{sec:NashQproof}

We first present two lemmas that make use of the fact that we solve for Nash equilibria in the stage games. All missing proofs and results are in the Appendix.

\begin{lemma}[Bounding the covering number]\label{lem:bound-import}
Let $i\in[n]$, and let $\bar{w}_i\in\R^d$ be such that $\norm{\bar{w}_i}\leq L$, $\bar{\Lambda}\in\R^{d\times d}$ be such that its minimum eigenvalue is greater or equal than $\lambda$, and, for all $(x,a)\in\S\times\A$, let $\phi(x,a)\in\R^d$ be such that $\norm{\phi(x, a)}\leq 1$, and let $\beta>0$.
Define the function class 
\begin{equation}
\label{eq:function_class}
\mathcal{V}_i = \Big\{V:\S\to\R\;\Big|\; V(\cdot)
=\max_{\nu\in\Delta(\A_i)}\E_{\substack{a_i\sim\nu\\a_{-i}\sim\pi_{-i}(\cdot)}}\Big[\min\Big\{ \bar{w}_i^\top\phi(\cdot, a)
+ \beta \sqrt{\phi(\cdot, a)^\top\bar{\Lambda}^{-1} \phi(\cdot, a)}, H \Big\}\Big]\Big\},
\end{equation}
where 
$\pi_{-i}(\cdot)\in\Delta(\A_{-i})$. Let $\mathcal{N}_{\epsilon}$ be the $\epsilon$-covering number of $\mathcal{V}_i$ with respect to the distance $\operatorname{dist}(V, V') = \sup_{x\in\S} |V(x) - V'(x)|$. Then, 
$$
\log \mathcal{N}_\epsilon \leq d  \log (1+ 4L/ \epsilon ) + d^2 \log \bigl [ 1 +  8 d^{1/2} \beta^2  / (\lambda\epsilon^2)  \bigr ].
$$
\end{lemma}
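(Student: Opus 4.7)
The plan is to follow the standard covering-number argument for linear-MDP function classes as in \citep{CJ-ZY-ZW-MIJ:20}, and verify that wrapping a $\max_{\nu}\E_{\pi_{-i}}$ around the clipped ``linear plus elliptic bonus'' template leaves the argument essentially unchanged. I would parameterize $\V_i$ by $(w,A)\in\R^d\times\R^{d\times d}$, taking $A=\beta^2\bar{\Lambda}^{-1}$, so that $\|w\|\leq L$ and, using $\bar{\Lambda}\succeq\lambda I$ together with $\|\cdot\|_F\leq\sqrt{d}\|\cdot\|_{\mathrm{op}}$ for $d\times d$ matrices, $\|A\|_F\leq\sqrt{d}\beta^2/\lambda$. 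Writing the inner function as $g_{w,A}(x,a)=\min\{w^\top\phi(x,a)+\sqrt{\phi(x,a)^\top A\,\phi(x,a)},H\}$, every $V\in\V_i$ equals $\max_{\nu}\E_{a_i\sim\nu,\,a_{-i}\sim\pi_{-i}(\cdot)}[g_{w,A}(\cdot,a)]$ for some choice of $(w,A,\pi_{-i})$.

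The first key step is a parameter-Lipschitz estimate for $g_{w,A}$. Because $\min\{\cdot,H\}$ is $1$-Lipschitz, Cauchy--Schwarz with $\|\phi\|\leq 1$ bounds the linear-part difference by $\|w-w'\|$. For the bonus term I would use $|\sqrt{s}-\sqrt{t}|\leq\sqrt{|s-t|}$ for $s,t\geq 0$, together with $|\phi^\top(A-A')\phi|\leq\|A-A'\|_{\mathrm{op}}\leq\|A-A'\|_F$, giving a bonus difference of at most $\sqrt{\|A-A'\|_F}$. Combining these yields $\sup_{x,a}|g_{w,A}(x,a)-g_{w',A'}(x,a)|\leq\|w-w'\|+\sqrt{\|A-A'\|_F}$.

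Next I would transport this bound through the outer $\max_\nu\E_{\pi_{-i}}$. For any \emph{fixed} $\pi_{-i}$, the operator $g\mapsto\max_\nu\E_{a_i\sim\nu,\,a_{-i}\sim\pi_{-i}(x)}[g(x,\cdot)]$ is a non-expansion in sup norm, since $|\max_\nu\E_\nu[g]-\max_\nu\E_\nu[g']|\leq\max_\nu\E_\nu[|g-g'|]\leq\sup_a|g-g'|$ pointwise in $x$. Consequently closeness of the parameters $(w,A)$ implies sup-norm closeness of the induced $V$, \emph{uniformly in the choice of $\pi_{-i}$}, so an $\epsilon$-cover of $\V_i$ is obtained from any $(w,A)$-net of the parameter set by matching each target $V$ with its own $\pi_{-i}$.

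The remaining step is a routine volume estimate: an $(\epsilon/2)$-net of the $L$-ball in $(\R^d,\|\cdot\|)$ has size at most $(1+4L/\epsilon)^d$, and an $(\epsilon^2/4)$-net (in Frobenius) of the $\sqrt{d}\beta^2/\lambda$-Frobenius-ball in $\R^{d\times d}$ has size at most $(1+8\sqrt{d}\beta^2/(\lambda\epsilon^2))^{d^2}$; their product bounds $\N_\epsilon$, and taking logarithms recovers the stated inequality. The main conceptual obstacle I anticipate is precisely the last point of the previous paragraph---checking that a parameter-level net genuinely yields a function-level cover of $\V_i$ despite the arbitrary $\pi_{-i}$; the non-expansive character of $\max\E$ is exactly what lets this transfer go through without any multiplicative loss.
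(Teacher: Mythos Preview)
Your proposal is correct and follows essentially the same route as the paper's proof: both reduce the sup-distance between two elements of $\mathcal{V}_i$ (sharing the same $\pi_{-i}$) to the parameter-level bound $\|\bar w_i-\bar w_i'\|+\beta\sqrt{\|\bar\Lambda^{-1}-(\bar\Lambda')^{-1}\|_F}$ via non-expansiveness of $\max_\nu\E_{\pi_{-i}}[\cdot]$ and of $\min\{\cdot,H\}$, then invoke the standard ball-covering estimate from \citep[Lemma~D.6]{CJ-ZY-ZW-MIJ:20}. Your reparameterization $A=\beta^2\bar\Lambda^{-1}$ and your explicit remark that the net need only cover $(w,A)$---with the target's own $\pi_{-i}$ reused to define the approximating function---make the role of $\pi_{-i}$ clearer than the paper does, but the argument is the same.
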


\begin{lemma}[Optimism bounds]
\label{lem:optimism_bound} Consider the setting of Theorem~\ref{thm:main-nashQ}. Given the event $\mathcal{E}_i$ defined in Lemma~B.2, we have for all $(x, a, h, k) \in \S\times\A\times[H]\times[K]$ that  \begin{equation*}
Q^{i,\bre(\pi_{-i}^k),\pi_{-i}^k}_h(x,a)\leq Q^{i,k}_h(x,a) \text{ and }\quad V^{i,\bre(\pi_{-i}^k),\pi_{-i}^k}_h(x)\leq V^{i,k}_h(x).
\end{equation*}
\end{lemma}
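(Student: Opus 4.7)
The plan is to prove both inequalities simultaneously by backwards induction on $h$, descending from the trivial base case $h=H+1$. At $h=H+1$, line~4 of Algorithm~\ref{alg:main} sets $Q^{i,k}_{H+1}\equiv 0$, while the terminal condition of the Bellman recursion gives $Q^{i,\bre(\pi_{-i}^k),\pi_{-i}^k}_{H+1}\equiv 0$ and $V^{i,\bre(\pi_{-i}^k),\pi_{-i}^k}_{H+1}\equiv 0$, so both inequalities hold as equalities. For the inductive step at $h$, I would first establish the $Q$-inequality from the $V$-inequality at step $h+1$, and then deduce the $V$-inequality at step $h$ from the $Q$-inequality at step $h$.

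For the $Q$-inequality, the key ingredient is the linear MG structure: for any fixed function $V:\S\to[0,H]$, the quantity $r^i_h(x,a) + (\mathbb{P}_h V)(x,a)$ is linear in $\phi(x,a)$, so there exists $w^\star\in\R^d$ (depending on $V^{i,k}_{h+1}$) with $r^i_h+\mathbb{P}_h V^{i,k}_{h+1} = \phi(\cdot,\cdot)^\top w^\star$. The ridge estimator $w^{i,k}_h$ defined in line~9 is precisely the regularized least-squares estimate of $w^\star$ from the past $k-1$ trajectories. I expect the event $\mathcal{E}_i$ from Lemma~B.2 to be a self-normalized concentration bound made uniform over the class $\V_i$ of Lemma~\ref{lem:bound-import} via its covering number, guaranteeing that
\[
\left| (w^{i,k}_h)^\top \phi(x,a) - \left(r^i_h + \mathbb{P}_h V^{i,k}_{h+1}\right)(x,a) \right| \;\le\; \beta\sqrt{\phi(x,a)^\top (\Lambda^k_h)^{-1} \phi(x,a)}
\]
uniformly in $(x,a,h,k)$. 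Combined with the inductive hypothesis $V^{i,\bre(\pi_{-i}^k),\pi_{-i}^k}_{h+1}\le V^{i,k}_{h+1}$ (which, by monotonicity of $\mathbb{P}_h$, transfers to $\mathbb{P}_h V^{i,\bre(\pi_{-i}^k),\pi_{-i}^k}_{h+1}\le\mathbb{P}_h V^{i,k}_{h+1}$) and with the Bellman equation $Q^{i,\bre(\pi_{-i}^k),\pi_{-i}^k}_h = r^i_h + \mathbb{P}_h V^{i,\bre(\pi_{-i}^k),\pi_{-i}^k}_{h+1}$, this produces $Q^{i,\bre(\pi_{-i}^k),\pi_{-i}^k}_h(x,a)\le (w^{i,k}_h)^\top\phi(x,a) + \beta\sqrt{\phi(x,a)^\top (\Lambda^k_h)^{-1}\phi(x,a)}$. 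Since the best-response $Q$-function is pointwise at most $H$, the clipping with $H$ in line~10 preserves the inequality, yielding $Q^{i,\bre(\pi_{-i}^k),\pi_{-i}^k}_h\le Q^{i,k}_h$.

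For the $V$-inequality, I would exploit the Nash equilibrium characterization of the policy $\pi^k_h(x)$ constructed in line~14: since no agent can gain by unilaterally deviating in the stage game with payoffs $(Q^{j,k}_h(x,\cdot))_{j\in[n]}$, we have $V^{i,k}_h(x) = \E_{a\sim\pi^k_h(x)}[Q^{i,k}_h(x,a)] = \max_{\nu\in\Delta(\A_i)}\E_{a_i\sim\nu,\,a_{-i}\sim\pi^k_{-i,h}(x)}[Q^{i,k}_h(x,a)]$. On the other hand, by the definition of best response, $V^{i,\bre(\pi_{-i}^k),\pi_{-i}^k}_h(x)$ equals the same maximum with $Q^{i,\bre(\pi_{-i}^k),\pi_{-i}^k}_h$ in place of $Q^{i,k}_h$. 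Applying the just-proved $Q$-inequality pointwise inside the expectation, followed by the Nash identity above, completes the induction.

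The main obstacle is not the inductive step itself, which is a clean linear-algebraic and game-theoretic consequence once $\mathcal{E}_i$ is assumed, but rather the establishment of the uniform concentration underlying $\mathcal{E}_i$: the regression target at step $h$ depends on $V^{i,k}_{h+1}$, which is itself data-dependent through the stage-game Nash equilibrium and the ridge estimator at step $h+1$, and thus cannot be handled by a simple martingale bound. The covering-number control of $\V_i$ in Lemma~\ref{lem:bound-import} is precisely what is needed to combine a union bound over the discretized function class with a standard self-normalized martingale inequality, delivering the uniform bound that defines $\mathcal{E}_i$.
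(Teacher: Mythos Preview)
Your proposal is correct and follows essentially the same argument as the paper: backwards induction on $h$, with the $Q$-step driven by the concentration event $\mathcal{E}_i$ (which, as you anticipate, is exactly the content of Lemma~B.3 once rephrased as $|(w^{i,k}_h)^\top\phi - (r^i_h+\Pe_h V^{i,k}_{h+1})|\le \beta\|\phi\|_{(\Lambda^k_h)^{-1}}$) together with monotonicity of $\Pe_h$ and the $H$-clipping, and the $V$-step driven by the Nash-equilibrium identity $V^{i,k}_h(x)=\max_{\nu\in\Delta(\A_i)}\E_{a_i\sim\nu,\,a_{-i}\sim\pi^k_{-i,h}(x)}[Q^{i,k}_h(x,a)]$ combined with the best-response characterization of $V^{i,\bre(\pi_{-i}^k),\pi_{-i}^k}_h$. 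The only cosmetic difference is that the paper carries the $Q$-inequality as the induction hypothesis and derives the $V$-inequality at step $h+1$ first, whereas you carry the $V$-inequality; the two orderings are interchangeable.
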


{\par \textbf{The importance of Lemma~\ref{lem:bound-import} and Lemma~\ref{lem:optimism_bound}}} Lemma~\ref{lem:bound-import} defines a function class $\mathcal{V}_i$ to which the function $V^{i,k}_h(\cdot)=\E_{a\sim\pi^k_h(\cdot)}[Q^{i,k}_h(\cdot,a)]$ belongs -- line 14 from Algorithm~\ref{alg:main}. Indeed, the characterization of $\mathcal{V}_i$ includes the one of a Nash equilibrium for a static game; however, we remark that $\pi_{-i}(\cdot)\in\Delta(\A_{-i})$ in the statement of Lemma~\ref{lem:bound-import} does not need to be a product measure. Using a covering number argument, Lemma~\ref{lem:bound-import} would be used to prove a series of results that would end up being used by Lemma~\ref{lem:optimism_bound}. Fundamentally, Lemma~\ref{lem:optimism_bound} makes use of (i) the optimism bonus at each episode -- the factor starting with $\beta$ in line 10 of \algbrev ~-- and (ii) the selection of Nash equilibria across all stage games. Bounding the best-response value functions across agents in Lemma~\ref{lem:optimism_bound} is important because it upper bounds one of the terms of the regret, see~\eqref{eq:regret_nash}. 
Finally, we end our discussion by pointing out that these lemmas are the only two places in the proof of Theorem~\ref{thm:main-nashQ} which makes direct use of the notion of Nash equilibria.


\subsection{Proof sketch
of Theorem~\ref{thm:main-nashQ}}


We present the proof sketch of our main result. A more detailed full version of the proof along with all auxiliary results and necessary proofs are found in the Appendix.  


Let us first condition on the event $\bigcap_{i=1}^n\mathcal{E}_i$ where $\mathcal{E}_i$ is defined in Lemma~B.2.
Since $\Pe[\text{not }  \mathcal{E}_i]\leq \delta$, applying union bound let us conclude that $\Pe[\bigcap_{i\in[n]}\mathcal{E}_i]\geq 1-n\delta
$.
Conditioning on this event allows us to use Lemma~\ref{lem:optimism_bound} for every $i\in[n]$.

For any $k\in[K]$, given the policy $\pi^k=\{\pi^k_i\}_{i\in[n]}$ defined by \algbrev, we define the functions $\hat{Q}_h^k$ 
and $\hat{V}_h^k$ 
recursively as: $\hat{V}_{H+1}^k(x)=\hat{Q}_{H+1}^k(x)=0$ and 
\begin{align*}
\hat{Q}_h^k(x,a) &= \Pe_h\hat{V}^k_{h+1}(x,a)+2\beta\sqrt{(\phi^k_h)^\top(\Lambda^k_h)^{-1}\phi^k_h},\\
\hat{V}_h^k(x)&=\E_{a\sim\pi^k_h(x)}[\hat{Q}_h^k(x,a)]
\end{align*}
for any $h=H,\dots,1$ and $(x,a)\in\S\times\A$. Notice that since $2\beta\sqrt{(\phi^k_h)^\top(\Lambda^k_h)^{-1}\phi^k_h}\leq 2\beta\sqrt{(\phi^k_h)^\top\phi^k_h}=2\beta\norm{\phi^k_h}\leq 2\beta$, we have that $\hat{Q}_h^k$ 
and $\hat{V}_h^k$ are nonnegative with maximum value $2\beta H$.

Let $k\in[K]$. We 
can show that
for any $(h,x,a)\in[H]\times\S\times\A$, 
\begin{equation}
\label{eq:claim-upper-bound}
\begin{aligned}
        \max_{i\in[n]}(Q^{i,k}_h(x,a)-Q^{i,\pi^k}_h(x,a))&\leq \hat{Q}_h^k(x,a)\text{, and}\\
        \max_{i\in[n]}(V^{i,k}_h(x)-V^{i,\pi^k}_h(x))&\leq \hat{V}_h^k(x).
\end{aligned}
\end{equation}

We now introduce the following notation: $\delta^{k}_h := \E_{a\sim\pi^k_h}[\hat{Q}^k_h(x^k_h,a)]-\hat{Q}^k_h(x^k_h,a^k_h)$, and $\xi^k_{h+1} := 
\Pe_h\hat{V}^k_{h+1}(x^k_h,a^k_h) - \hat{V}^k_{h+1}(x^k_{h+1})$ with $\xi^k_1:=0$. Then, for any $(h,k) \in [H] \times [K]$, we can show that 
\begin{equation*}
\hat{V}^k_h(x^k_h)=\delta^k_h+\xi^k_{h+1}+2\beta\sqrt{(\phi^k_h)^\top(\Lambda^k_h)^{-1}\phi^k_h}+\hat{V}^k_{h+1}(x^k_{h+1}).
\end{equation*}

Now, let us focus on the regret performance metric.
\begin{equation}
\label{eq:regret_prev_p}
\begin{aligned}
\textnormal{Regret}(K) &=  \sum_{k=1}^K\max_{i\in[n]}(V_1^{i,\bre(\pi^k_{-i}),\pi^k_{-i}}(s_o) - V_1^{i,\pi^k}(s_o))\\
&\overset{(a)}{\leq}\sum_{k=1}^{K} \max_{i\in[n]}(V^{i,k}_1(s_o) - V^{i,\pi^k}_1 (s_o))\\
&\overset{(b)}{\leq}\sum_{k=1}^{K} \hat{V}^{k}_1(s_o)
\\
&=\underbrace{\sum_{k=1}^K\sum_{h=1}^H\xi^k_h}_{\textrm{(I)}}+\underbrace{\sum_{k=1}^{K}\sum_{h=1}^H \delta^{k}_{h}}_{\textrm{(II)}} + \underbrace{2\beta \sum_{k=1}^{K}\sum_{h=1}^H \sqrt{(\phi^{k}_h)^\top (\Lambda^k_h)^{-1}\phi^{k}_h}}_{\textrm{(III)}},
\end{aligned}
\end{equation}
where (a) follows from Lemma~\ref{lem:optimism_bound} and the fact that we are conditioned on the event $\bigcap^n_{i=1}\mathcal{E}_i$; (b) follows from~\eqref{eq:claim-upper-bound}.

We first analyze the term (I) from~\eqref{eq:regret_prev_p}. 
By defining an appropriate infinite sequence of tuples $\L^\star\subset \mathbb{Z}_{\geq 1}\times[H]$, we can show that $\{\xi^k_h\}_{(k,h)\in\L^\star}$ is a martingale difference sequence.  
Therefore, we can use the Azuma-Hoeffding inequality to conclude that, for any $\epsilon > 0$,
\begin{equation*}
\Pr \left(\sum_{k=1}^{K}\sum_{h=1}^H  \xi^{k}_{h}> \epsilon \right) \leq \exp \bigg (\frac{-2 \epsilon^2 } {(KH)(16\beta^2H^2) } \bigg ).
\end{equation*}
We choose $\epsilon=\sqrt{8KH^3\beta^2\log\left(\frac{1}{\delta}\right)}$. Then, with probability at least $1 -\delta$,   
\begin{equation}
\label{eq:final2}
  \textrm{(I)}=\sum_{k=1}^{K}\sum_{h=1}^H  \xi^k_{h}\leq \sqrt{8KH^3\beta^2\log\left(\frac{1}{\delta}\right)} \leq 8\beta H\sqrt{KH\iota}, 
\end{equation}
recalling that $\iota = \log\left(\frac{dKH}{\delta}\right)$. We call $\bar{\mathcal{E}}$ the event such that~\eqref{eq:final2} holds.

The term (II) can be analyzed in a very similar way as in (I) to show that $\{\delta^k_h\}_{(k,h)\in\cL^*}$ is a martingale difference sequence, and thus obtain that with probability at least $1-\delta$,
\begin{equation}
\label{eq:final3}
  \textrm{(II)}=\sum_{k=1}^{K}\sum_{h=1}^H  \delta^k_{h}\leq 8\beta H\sqrt{KH\iota}. 
\end{equation}
We call $\tilde{\mathcal{E}}$ the event such that~\eqref{eq:final3} holds.

We now analyze the term (III) from~\eqref{eq:regret_prev_p}.
\begin{equation}
\label{eq:aux_last_2}
\textrm{(III)}=2\beta \sum_{h=1}^H\sum_{k=1}^{K} \sqrt{(\phi^{k}_h)^\top (\Lambda^k_h)^{-1}\phi^{k}_h}
  \overset{\textrm{(a)}}{\leq}2\beta \sum_{h=1}^H  \sqrt{K}\sqrt{ \sum_{k=1}^K (\phi^{k}_h)^\top (\Lambda^k_h)^{-1}\phi^{k}_h}
\overset{(b)}{\leq} 2\beta H \sqrt{2dK\iota}, 
\end{equation}
where (a) follows from the Cauchy-Schwartz inequality, and we can show (b) 
by using the so-called elliptical potential lemma~\citep[Lemma~11]{YAY-DP-CS:11}.

Now, using the results in~\eqref{eq:final2}, \eqref{eq:final3}, and~\eqref{eq:aux_last_2} back in~\eqref{eq:regret_prev_p}, we conclude that,
\begin{equation}
\label{eq:regret_almost_l}
\begin{aligned}
\textnormal{Regret}(K) &\leq  
8\beta H\sqrt{KH\iota} + 8\beta H\sqrt{KH\iota}
+ 
2\beta H \sqrt{dK\iota}\\
&=16c_\beta\sqrt{d^2 K H^5\iota^2}
+ 
2c_\beta\sqrt{ d^3KH^4\iota^2}
\\
&\overset{\textrm{(a)}}{\leq} 18c_\beta\sqrt{d^3KH^5\iota^2},
\end{aligned}
\end{equation}
where (a) follows from $\sqrt{\iota}\leq \iota$.

Finally, applying union bound let us conclude that 
$\Pe[\bigcap_{i\in[n]}\mathcal{E}_i\cap\bar{\mathcal{E}}\cap\tilde{\mathcal{E}}]\geq 1-(n+2)\delta
$,
i.e., our final result holds with probability at least $1-(n+2)\delta
$. This finishes the proof of Theorem~\ref{thm:main-nashQ}.

\section{Conclusion}
\label{sec:conclusion}
We have shown the sample-efficiency of Nash Q-learning under linear function approximation -- ideal for large state spaces or continuous ones -- by making use of the principle of optimism in the face of uncertainty -- largely exploited in the modern RL literature. We also compared our result to the sample complexity obtained for single-agent RL with linear function approximation and for general-sum MARL on the tabular case. We hope our work may open the path to the future analysis of a more diverse set of MARL algorithms.

One future research direction is 
obtaining sample performance lower bounds to analyze the (closeness to) minimax optimality of general-sum MARL algorithms such as Nash Q-learning. 
Moreover, though most modern theoretical work in RL (including this paper) mostly focus on sample efficiency, it is relevant to propose and study algorithms that are also computational efficient -- for which other weaker solutions to MGs are useful. Finally, another future direction would be to expand the analysis of Nash Q-learning to 
nonlinear function approximators
such as neural networks.
%


\bibliography{NashQ}
\appendix

\newpage

\section*{APPENDIX}

Let $\Z_{\geq 0}$ ($\Z_{\geq 1}$) be the set of non-negative (positive) integers. 

All results that are direct adaptations or restatements from existing results in the single RL literature from~\citep{CJ-ZY-ZW-MIJ:20} will have their detailed proofs if necessary to understand the nuances of their adaptation to our setting. Such proofs will be deferred to the last Section~\ref{sec:remaining-proofs} of the Appendix.

\section{Auxiliary results}

The following proposition is an immediate adaptation of an existing one in 
\citep{CJ-ZY-ZW-MIJ:20} for MDPs.

\begin{proposition}[Bounded parameters for Q-functions -- Proposition~2.3 and Lemma~B.1 in~\citep{CJ-ZY-ZW-MIJ:20}]
\label{prop:lin-Q}
Consider a linear stochastic game  $\M$. Given a policy profile $\pi$, we have that for any $i\in[n]$, there exist paremeters $w^{i,\pi}_h\in\R^d$, $h\in[H]$, such that $Q_h^{i,\pi}(x, a) = \langle\phi (x, a),w^{i,\pi}_h\rangle$ for any $(x, a) \in \S\times\A$ and $\norm{w^{i,\pi}_h} \leq 2H\sqrt{d}$.
\end{proposition}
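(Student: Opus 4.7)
The plan is to mimic the original single-agent argument of Jin et al.\ by exploiting the Bellman equation together with the linearity assumed in the definition of a linear Markov game, and then bound the resulting parameter in Euclidean norm using the stated magnitude bounds on $\theta_h$ and $\mu_h(\S)$.

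First I would start from the Bellman equation for agent $i$ under a fixed joint policy $\pi$:
\begin{equation*}
Q^{i,\pi}_h(x,a) = r^i_h(x,a) + (\Pe_h V^{i,\pi}_{h+1})(x,a).
\end{equation*}
Using the linear MG assumption, the reward is $r^i_h(x,a) = \langle \phi(x,a), \theta^i_h\rangle$, and for any bounded function $f:\S\to\R$ we have
\begin{equation*}
(\Pe_h f)(x,a) = \int_{\S} f(x') \langle \phi(x,a), \mu_h(x')\rangle dx' = \Big\langle \phi(x,a), \int_\S f(x')\mu_h(x')\,dx'\Big\rangle.
\end{equation*}
Applying this to $f = V^{i,\pi}_{h+1}$ and adding the reward contribution lets me take
\begin{equation*}
w^{i,\pi}_h := \theta^i_h + \int_\S V^{i,\pi}_{h+1}(x')\,\mu_h(x')\,dx',
\end{equation*}
so that $Q^{i,\pi}_h(x,a) = \langle \phi(x,a), w^{i,\pi}_h\rangle$ for every $(x,a)\in\S\times\A$. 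This construction proceeds by backward induction on $h$, starting from the base case $V^{i,\pi}_{H+1}\equiv 0$ so that $w^{i,\pi}_H = \theta^i_H$.

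Next I would bound $\norm{w^{i,\pi}_h}$ using the triangle inequality:
\begin{equation*}
\norm{w^{i,\pi}_h} \leq \norm{\theta^i_h} + \Big\|\int_\S V^{i,\pi}_{h+1}(x')\,\mu_h(x')\,dx'\Big\| \leq \sqrt{d} + \sup_{x'\in\S} |V^{i,\pi}_{h+1}(x')| \cdot \norm{\mu_h(\S)}.
\end{equation*}
Since rewards lie in $[0,1]$ and the remaining horizon is at most $H$, we have $|V^{i,\pi}_{h+1}(x')|\leq H$; combined with the assumption $\norm{\mu_h(\S)}\leq \sqrt{d}$, this yields $\norm{w^{i,\pi}_h}\leq \sqrt{d} + H\sqrt{d} \leq 2H\sqrt{d}$ whenever $H\geq 1$.

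I do not anticipate any serious obstacle: the proof is essentially a direct transcription of the single-agent linear MDP argument, with the only adaptation being that the value function in question is the one for agent $i$ under the joint policy $\pi$, which is still a scalar function on $\S$ bounded by $H$. The multi-agent nature of the game plays no role here because, once $\pi$ is fixed, the environment from agent $i$'s perspective behaves as a (policy-indexed) linear MDP with the same transition kernel and reward $r^i_h$, so all the ingredients needed to invoke the standard linear parameterization carry over verbatim.
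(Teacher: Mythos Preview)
Your proposal is correct and is exactly the standard Jin et al.\ argument the paper defers to: the paper does not give its own proof of this proposition but simply cites it as an immediate adaptation of Proposition~2.3 and Lemma~B.1 in~\citep{CJ-ZY-ZW-MIJ:20}, and your construction $w^{i,\pi}_h = \theta^i_h + \int_\S V^{i,\pi}_{h+1}(x')\,\mu_h(x')\,dx'$ together with the bound $(1+H)\sqrt{d}\leq 2H\sqrt{d}$ is precisely that argument. The only cosmetic remark is that the backward induction is not actually needed for the linearity claim---the formula for $w^{i,\pi}_h$ is valid for each $h$ directly once $V^{i,\pi}_{h+1}$ is known to be a bounded scalar function on $\S$---but this does not affect correctness.
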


The following lemma is a restatement of another one in~\citep{CJ-ZY-ZW-MIJ:20}, though with some different notation. 

\begin{lemma}[Concentration bound for self-normalized processes -- Lemma D.4 in~\citep{CJ-ZY-ZW-MIJ:20}]
\label{lem:self_norm_covering}
Let $\{\F_{\tau}\}_{\tau=0}^\infty$ be a filtration. Let $\{x_{\tau}\}_{\tau=1}^\infty$ be a stochastic process on $\S$ such that $x_{\tau}\in\F_{\tau}$, and let $\{\phi_{\tau}\}_{\tau=1}^\infty$ be an $\R^d$-valued stochastic process such that $\phi_{\tau} \in \F_{\tau-1}$ and $\norm{\phi_{\tau}}\leq 1$. Let $\G$ be a function class of real-valued functions such that $\sup_{x\in\S} |g(x)| \leq H$ for any $g\in\G$, and with $\epsilon$-covering number $\mathcal{N}_{\epsilon}$ with respect to the distance $\mathrm{dist}(g, g') = \sup_{x\in S} |g(x) - g'(x)|$. 
Let $\Lambda_{A} = \lambda I_d + \sum_{\tau=1}^A\phi_{\tau} \phi_{\tau}^\top$. Then for every $A\in \Z_{\geq 1}$, every $g \in \G$, and any $\delta\in(0,1]$, we have that with probability at least $1-\delta$,
\begin{equation*} 
\norm{\sum_{\tau = 1}^A\phi_{\tau} \{ g(x_{\tau}) - \E[g(x_{\tau})|\F_{\tau-1}] \} }^2_{\Lambda_{A}^{-1}}
\leq 4H^2 \left[ \frac{d}{2}\log\biggl( \frac{\lambda+A/d}{\lambda}\biggr )  + \log\frac{\mathcal{N}_{\epsilon}}{\delta}\right]  + \frac{8A^2\epsilon^2}{\lambda}.
\end{equation*}
\end{lemma}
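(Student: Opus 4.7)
The plan is to prove this uniform concentration bound in two stages: first establish a self-normalized martingale tail bound for a \emph{fixed} $g\in\G$, and then extend the bound uniformly over $\G$ via a covering argument, controlling the approximation residual by the spectral lower bound on $\Lambda_A$.

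\textbf{Stage one (fixed $g$).} Because $x_\tau\in\F_\tau$ and $\phi_\tau\in\F_{\tau-1}$ with $\norm{\phi_\tau}\leq 1$, the scalar increments $\eta_\tau(g):=g(x_\tau)-\E[g(x_\tau)\mid\F_{\tau-1}]$ form a martingale difference sequence bounded by $2H$ (hence sub-Gaussian with parameter $H$), while the features $\phi_\tau$ are previsible. This matches the setup of the vector-valued self-normalized concentration inequality of Abbasi-Yadkori, P\'al, and Szepesv\'ari, which gives, for every fixed $g$ and any $\delta'\in(0,1]$, with probability at least $1-\delta'$,
\[
\norm{\sum_{\tau=1}^A \phi_\tau\,\eta_\tau(g)}^2_{\Lambda_A^{-1}}\leq 2H^2\log\!\left(\frac{\det(\Lambda_A)^{1/2}\det(\lambda I_d)^{-1/2}}{\delta'}\right).
\]
Combining this with the AM--GM/determinant bound $\det(\Lambda_A)\leq(\lambda+A/d)^d$ (which uses $\norm{\phi_\tau}\leq 1$ and the inequality of arithmetic and geometric means on eigenvalues) yields a right-hand side of at most $H^2\bigl[d\log((\lambda+A/d)/\lambda)+2\log(1/\delta')\bigr]$.

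\textbf{Stage two (covering and residual).} Fix an $\epsilon$-cover $\G_\epsilon\subset\G$ of cardinality at most $\N_\epsilon$ with respect to $\operatorname{dist}$, and apply the fixed-$g$ bound at confidence $\delta'=\delta/\N_\epsilon$ simultaneously to every element of $\G_\epsilon$ via a union bound. For an arbitrary $g\in\G$ choose $\tilde g\in\G_\epsilon$ with $\sup_{x\in\S}|g(x)-\tilde g(x)|\leq\epsilon$, and write $S(f):=\sum_{\tau=1}^A\phi_\tau\,\eta_\tau(f)$. Decomposing $S(g)=S(\tilde g)+S(g-\tilde g)$ and applying the elementary inequality $\norm{u+v}^2_{\Lambda_A^{-1}}\leq 2\norm{u}^2_{\Lambda_A^{-1}}+2\norm{v}^2_{\Lambda_A^{-1}}$ together with the spectral estimate $\Lambda_A^{-1}\preceq\lambda^{-1}I_d$, the residual is bounded by
\[
\norm{S(g-\tilde g)}^2_{\Lambda_A^{-1}}\leq \lambda^{-1}\bigl(\sum_{\tau=1}^A\norm{\phi_\tau}\cdot 2\epsilon\bigr)^2\leq 4A^2\epsilon^2/\lambda,
\]
since $|\eta_\tau(g-\tilde g)|\leq 2\epsilon$ uniformly in $\tau$. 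Putting the pieces together gives $\norm{S(g)}^2_{\Lambda_A^{-1}}\leq 2\norm{S(\tilde g)}^2_{\Lambda_A^{-1}}+8A^2\epsilon^2/\lambda$; substituting the fixed-$g$ bound for $\tilde g$ (at confidence $\delta/\N_\epsilon$) and doubling the $2H^2$ prefactor produces exactly the claimed estimate.

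\textbf{Main obstacle.} The difficulty is not conceptual but bookkeeping: tracking the constant $4H^2$ (from doubling the $2H^2$ fixed-$g$ coefficient) and the residual $8A^2\epsilon^2/\lambda$ (from doubling $4A^2\epsilon^2/\lambda$), and carefully verifying that the filtration conditions $x_\tau\in\F_\tau$, $\phi_\tau\in\F_{\tau-1}$ legitimize the martingale concentration step under our multi-agent notation. Since the statement is explicitly a restatement of Lemma~D.4 of~\citep{CJ-ZY-ZW-MIJ:20}, the purpose of the proof is precisely to re-verify these constants and measurability conditions in our setting rather than to develop new probabilistic machinery.
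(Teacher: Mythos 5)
Your proposal follows essentially the same route as the paper's own proof: decompose $g=\tilde g+\Delta_g$ via an $\epsilon$-cover, apply the self-normalized concentration bound of Abbasi-Yadkori et al.\ (Theorem~1) together with the determinant--trace inequality and a union bound over the cover for the $\tilde g$ term, bound the residual by $4A^2\epsilon^2/\lambda$ using $\norm{\phi_\tau}\leq 1$ and $\Lambda_A^{-1}\preceq\lambda^{-1}I_d$, and combine with $\norm{u+v}^2_{\Lambda_A^{-1}}\leq 2\norm{u}^2_{\Lambda_A^{-1}}+2\norm{v}^2_{\Lambda_A^{-1}}$. The constants and the structure match the paper's argument (including the same slight informality about the sub-Gaussian parameter of the centered increments, which the paper also glosses over by taking $\tilde g\in[0,H]$), so the proposal is correct and not materially different.
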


The following lemma is a key result in the proof of Theorem~\ref{thm:main-nashQ}, as seen in Section~\ref{sec:NashQproof} from the main paper.

\begin{lemma}[Bounding the covering number]\label{lem:bound-import-app}
Let $i\in[n]$, and let $\bar{w}_i\in\R^d$ be such that $\norm{\bar{w}_i}\leq L$, $\bar{\Lambda}\in\R^{d\times d}$ be such that its minimum eigenvalue is greater or equal than $\lambda$, and, for all $(x,a)\in\S\times\A$, let $\phi(x,a)\in\R^d$ be such that $\norm{\phi(x, a)}\leq 1$, and let $\beta>0$.
Define the function class 
\begin{equation}
\label{eq:function_class}
\mathcal{V}_i = \Big\{V:\S\to\R\;\Big|\; V(\cdot)
=\max_{\nu\in\Delta(\A_i)}\E_{\substack{a_i\sim\nu\\a_{-i}\sim\pi_{-i}(\cdot)}}\Big[\min\Big\{ \bar{w}_i^\top\phi(\cdot, a)
+ \beta \sqrt{\phi(\cdot, a)^\top\bar{\Lambda}^{-1} \phi(\cdot, a)}, H \Big\}\Big]\Big\},
\end{equation}
where 
$\pi_{-i}(\cdot)\in\Delta(\A_{-i})$. Let $\mathcal{N}_{\epsilon}$ be the $\epsilon$-covering number of $\mathcal{V}_i$ with respect to the distance $\operatorname{dist}(V, V') = \sup_{x\in\S} |V(x) - V'(x)|$. Then, 
$$
\log \mathcal{N}_\epsilon \leq d  \log (1+ 4L/ \epsilon ) + d^2 \log \bigl [ 1 +  8 d^{1/2} \beta^2  / (\lambda\epsilon^2)  \bigr ].
$$
\end{lemma}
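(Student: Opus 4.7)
My plan is to adapt the standard covering argument of~\citep[Lemma~D.6]{CJ-ZY-ZW-MIJ:20} to the multi-agent setting. First, I would reparametrize by setting $A := \beta^2 \bar{\Lambda}^{-1}$, which is symmetric positive semidefinite with $\|A\|_F \leq \sqrt{d}\,\|A\|_{\mathrm{op}} \leq \sqrt{d}\beta^2/\lambda$ thanks to the minimum-eigenvalue hypothesis on $\bar{\Lambda}$. Each $V \in \mathcal{V}_i$ is then characterized, for a given $\pi_{-i}$, by the pair $(w, A) \in \R^d \times \R^{d \times d}$ with $\|w\| \leq L$ and $A$ living in the Frobenius ball of radius $\sqrt{d}\beta^2/\lambda$.

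Next I would prove a Lipschitz-in-parameters bound. Fix any $\pi_{-i}$ and let $V, V' \in \mathcal{V}_i$ be built from $(w, A)$ and $(w', A')$ respectively, sharing the same $\pi_{-i}$. Using in succession the non-expansive inequalities $|\max_\nu f - \max_\nu g| \leq \sup_\nu |f - g|$ and $|\E_\mu f - \E_\mu g| \leq \E_\mu |f-g|$, together with the fact that $t \mapsto \min\{t, H\}$ is $1$-Lipschitz, the argument telescopes to
\begin{equation*}
\sup_{x \in \S} |V(x) - V'(x)| \leq \sup_{(x, a) \in \S \times \A} \Bigl| (w - w')^\top \phi(x, a) + \sqrt{\phi(x,a)^\top A\, \phi(x,a)} - \sqrt{\phi(x,a)^\top A'\, \phi(x,a)} \Bigr|.
\end{equation*}
Combining $\|\phi(x,a)\| \leq 1$ with the elementary inequalities $|\sqrt{p} - \sqrt{q}| \leq \sqrt{|p - q|}$ and $|\phi^\top (A - A') \phi| \leq \|A - A'\|_{\mathrm{op}} \leq \|A - A'\|_F$ then gives the clean bound
\begin{equation*}
\sup_{x \in \S} |V(x) - V'(x)| \leq \|w - w'\| + \sqrt{\|A - A'\|_F}.
\end{equation*}

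The remainder is the standard volume argument. I would take an $(\epsilon/2)$-net $\mathcal{C}_w$ of the Euclidean $L$-ball in $\R^d$, of cardinality at most $(1 + 4L/\epsilon)^d$, and an $(\epsilon^2/4)$-net $\mathcal{C}_A$ of the Frobenius ball of radius $\sqrt{d}\beta^2/\lambda$ in $\R^{d \times d} \cong \R^{d^2}$, of cardinality at most $(1 + 8\sqrt{d}\beta^2/(\lambda \epsilon^2))^{d^2}$. For any $V \in \mathcal{V}_i$ with parameters $(w, A, \pi_{-i})$, I would pick the nearest $(w^\star, A^\star) \in \mathcal{C}_w \times \mathcal{C}_A$ and let $V^\star$ be the element of $\mathcal{V}_i$ built from $(w^\star, A^\star)$ and the \emph{same} $\pi_{-i}$ as $V$; the Lipschitz bound above then delivers $\dist(V, V^\star) \leq \epsilon/2 + \sqrt{\epsilon^2/4} = \epsilon$. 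Taking the logarithm of $|\mathcal{C}_w| \cdot |\mathcal{C}_A|$ yields exactly the stated inequality.

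The main obstacle is verifying that the outer maximization over the continuous simplex $\Delta(\A_i)$ together with the expectation against $\pi_{-i}(\cdot)$ do not inflate the covering number beyond the single-agent case; this is exactly what the non-expansiveness chain in the second paragraph delivers, so that the proof reduces to Euclidean and Frobenius ball covers familiar from~\citep{CJ-ZY-ZW-MIJ:20}. A secondary care point is the Frobenius-vs-operator norm conversion $\|A\|_F \leq \sqrt{d}\|A\|_{\mathrm{op}}$, which is what introduces the extra $d^{1/2}$ inside the logarithm of the matrix cover.
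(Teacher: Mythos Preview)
Your proposal is correct and follows essentially the same approach as the paper: both arguments establish the Lipschitz-in-parameters bound by chaining the non-expansiveness of $\max_{\nu\in\Delta(\A_i)}$, of the expectation against $\pi_{-i}(\cdot)$, and of $\min\{\cdot,H\}$, thereby reducing $\dist(V,V')$ to the single-agent form $\|w-w'\|+\sqrt{\|A-A'\|_F}$; the paper then simply invokes \citep[Lemma~D.6]{CJ-ZY-ZW-MIJ:20}, whereas you spell out the same Euclidean/Frobenius net construction explicitly. Your reparametrization $A=\beta^2\bar\Lambda^{-1}$ and the Frobenius-radius bound $\|A\|_F\le\sqrt{d}\beta^2/\lambda$ are exactly the ingredients hidden inside that citation.
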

\begin{proof}
Let $V,V'\in\mathcal{V}_i$. Let $\bar{u}(x,a):=\sqrt{\phi(x, a)^\top\bar{\Lambda}^{-1} \phi(x,a)}$ and $\bar{u}'(x,a):=\sqrt{\phi(x, a)^\top(\bar{\Lambda}')^{-1} \phi(x, a)}$, and let $\bar{g}(x,a)=\min\left\{ \bar{w}_i^\top\phi(x, a) + \beta \bar{u}(x,a), H \right\}$ and $\bar{g}'(x,a)=\min\left\{ \bar{w}_i^\top\phi(x, a) + \beta \bar{u}'(x,a), H \right\}$. Then, 
\begin{equation}
\begin{aligned}
\dist(V,V')
&=\sup_{x\in\S}
\Big|\max_{\nu\in\Delta(\A_i)}\E_{\substack{a_i\sim\nu\\a_{-i}\sim\pi_{-i}(x)}}[\bar{g}(x,a)]-\max_{\nu\in\Delta(\A_i)}\E_{\substack{a_i\sim\nu\\a_{-i}\sim\pi_{-i}(x)}}[\bar{g}'(x,a)]\Big|\\
&\overset{(a)}{\leq}\sup_{\substack{x\in\S\\\nu\in\Delta(\A_i)}}
\Big|\E_{\substack{a_i\sim\nu\\a_{-i}\sim\pi_{-i}(x)}}[\bar{g}(x,a)]-\E_{\substack{a_i\sim\nu\\a_{-i}\sim\pi_{-i}(x)}}[\bar{g}'(x,a)]\Big|\\
&=\sup_{\substack{x\in\S\\\nu\in\Delta(\A_i)}}
\Big|\E_{\substack{a_i\sim\nu\\a_{-i}\sim\pi_{-i}(x)}}[\bar{g}(x,a)-\bar{g}'(x,a)]\Big|\\
&\leq\sup_{\substack{x\in\S\\\nu\in\Delta(\A_i)}}
\E_{\substack{a_i\sim\nu\\a_{-i}\sim\pi_{-i}(x)}}|\bar{g}(x,a)-\bar{g}'(x,a)|\\
&\leq\sup_{\substack{x\in\S\\a\in\A}}
\Big|\bar{g}(x,a)-\bar{g}'(x,a)\Big|\\
&\leq\sup_{\substack{x\in\S\\a\in\A}}|\min\{ \bar{w}_i^\top\phi(x, a) + \beta \bar{u}(x,a), H \} - \min\{ (\bar{w}_i')^\top\phi(x, a) + \beta \bar{u}'(x,a), H \}|\\
&\overset{(b)}{\leq}\sup_{\substack{x\in\S\\a\in\A}}|\bar{w}_i^\top\phi(x, a) + \beta \bar{u}(x,a) - ((\bar{w}_i')^\top\phi(x, a) + \beta \bar{u}'(x,a))|\\
&\leq \sup_{\substack{x\in\S\\a\in\A}}|(\bar{w}_i-\bar{w}_i')^\top\phi(x,a)|+\beta\sup_{\substack{x\in\S\\a\in\A}}|\bar{u}(x,a)-\bar{u}'(x,a)|
\end{aligned}
\end{equation}
Inequality (a) follows from the property $|\max_{\nu\in\Delta(\A_i)}f(\nu)-\max_{\nu\in\Delta(\A_i)}h(\nu)|\leq \max_{\nu\in\Delta(\A_i)}|f(\nu)-h(\nu)|$ for any $f,h:\Delta(\A_i)\to\R$ since, letting $\bar{\nu}=\argmax_{\nu\in\Delta(\A_i)}f(\nu)$ and $\tilde{\nu}=\argmax_{\nu\in\Delta(\A_i)}h(\nu)$, we observe that:  (i) if $\max_{\nu\in\Delta(\A_i)}f(\nu)>\max_{\nu\in\Delta(\A_i)}h(\nu)$, then $\max_{\nu\in\Delta(\A_i)}f(\nu)-\max_{\nu\in\Delta(\A_i)}h(\nu)\leq f(\bar{\nu})-h(\bar{\nu})\leq\max_{\nu\in\Delta(\A_i)}|f(\nu)-h(\nu)|$; and (ii) if $\max_{\nu\in\Delta(\A_i)}f(\nu)\leq\max_{\nu\in\Delta(\A_i)}h(\nu)$, then $\max_{\nu\in\Delta(\A_i)}h(\nu)-\max_{\nu\in\Delta(\A_i)}f(\nu)\leq h(\tilde{\nu})-f(\tilde{\nu})\leq\max_{\nu\in\Delta(\A_i)}|f(\nu)-h(\nu)|$. Inequality (b) follows from $\min\{\cdot,H\}$ being a non-expansive operator. 

We can continue bounding,
\begin{equation}
\label{eq:dist-covering-MG-0-app}
\begin{aligned}
\dist(V,V')
&\overset{(a)}{\leq}\sup_{\phi:\norm{\phi}\leq 1}\Big|
(\bar{w}_i-\bar{w}_i')^\top\phi\Big|\\
&\quad +\sup_{\phi:\norm{\phi}\leq 1}\beta\Big|
\sqrt{\phi^\top\bar{\Lambda}^{-1}\phi}-
\sqrt{\phi^\top(\bar{\Lambda}')^{-1}\phi}
\Big|\\
&\overset{(b)}{\leq}
\norm{\bar{w}_i-\bar{w}_i'} + \sup_{\phi:\norm{\phi}\leq 1}
\beta\sqrt{\left|\phi(x,a)^\top(\bar{\Lambda}^{-1}-(\bar{\Lambda}')^{-1})\phi(x,a)\right|}\\
&=\norm{\bar{w}_i-\bar{w}_i'} + 
\beta\sqrt{\norm{\bar{\Lambda}^{-1}-(\bar{\Lambda}')^{-1}}}\\
&\leq\norm{\bar{w}_i-\bar{w}_i'} + 
\beta\sqrt{\norm{\bar{\Lambda}^{-1}-(\bar{\Lambda}')^{-1}}_F}
\end{aligned}
\end{equation}
where (a) follows from the assumption $\sup_{x\in\S}\max_{a\in\A}\norm{\phi(x,a)}\leq 1$, and (b) follows from the inequality $|\sqrt{p}-\sqrt{q}|\leq\sqrt{|p-q|}$ for any $p,q\geq 0$. Now, we notice that~\eqref{eq:dist-covering-MG-0-app} is a bound of the same form of equation~(28) from~\citep[Lemma~D.6]{CJ-ZY-ZW-MIJ:20}, and so we can use the proof of this lemma to obtain that the $\epsilon$-covering number of $\mathcal{V}_i$, denoted by $\mathcal{N}_{\epsilon}$
can be upper bounded as $\log \mathcal{N}_{\epsilon}  \leq d  \log (1+ 4L/ \epsilon ) + d^2 \log \bigl [ 1 +  8 d^{1/2} \beta^2  / (\lambda\epsilon^2)  \bigr ]$. 
This finishes the proof.
\end{proof}

%
%


\section{Proving Theorem~\ref{thm:main-nashQ}}
\label{subsec:proof_main_parallel}
For simplicity, we will use the following notation: at episode $k$, we denote $\pi^{i,k}=\{\pi^{i,k}_h\}_{h\in[H]}$ as the policy induced by $\{Q_h^{i,k}\}_{h=1}^H$ as performed by agent $i\in[n]$ (line 14 of Algorithm~1) across time steps $h\in[H]$, thus for a fixed step $h\in[H]$ we let $V_h^{i,k}(x_h^{k})= \E_{a\sim \pi^{k}_h(x_h^{k})}[Q_h^{i,k}(x_h^{k},a)]$ with $\pi^{k}_h(x_h^{k})$ being a Nash equilibrium from the stage game $(Q^{i,k}_h(x_h^k,\cdot))_{i\in[n]}$. With some abuse of notation, we similarly define $V_h^{i,k}(x)= \E_{a\sim \pi^{k}_h(x)}[Q_h^{i,k}(x,a)]$ with $\pi^{k}_h(x)$ being a Nash equilibrium from the game $(Q^{i,k}_h(x,\cdot))_{i\in[n]}$. Let $\phi^\tau_h:=\phi(x^\tau_h,a^\tau_h)$. 

\subsection{Preliminary technical results}

We now bound the parameters $\{w^{i,k}_h\}_{(i,h,k)\in[n]\times[H]\times[K]}$ from the \algbrev ~algorithm.

\begin{lemma}[Parameter bound -- Lemma~B.2 in~\citep{CJ-ZY-ZW-MIJ:20}]
\label{lem:wn_estimate}
For any $(i,k, h) \in[n]\times[K]\times[H]$, the parameter $w^{i,k}_h$ in the \algbrev ~algorithm satisfies
$\norm{w^{i,k}_h}\leq(1+H) \sqrt{\frac{d(k-1)}{\lambda}}$.
\end{lemma}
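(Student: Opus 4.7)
The plan is to mirror the standard LSVI-UCB argument from~\citep{CJ-ZY-ZW-MIJ:20}, observing that the only structural change from the single-agent case is that the greedy maximum $\max_a Q^k_{h+1}(x',a)$ is replaced by the Nash-equilibrium expectation $\E_{a\sim\pi^*}[Q^{i,k}_{h+1}(x',a)]$. Critically, both quantities enter the closed-form for $w^{i,k}_h$ only as bounded scalar coefficients of $\phi_h^\tau$, and both inherit the range $[0,H]$ from the explicit clipping in line 10 of Algorithm~\ref{alg:main}. The specific combinatorial origin of $\pi^*$ therefore plays no role whatsoever in this lemma; only boundedness matters.

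First I will bound the scalar summands. Set $y_h^{i,k,\tau} := r_h^i(x_h^\tau,a_h^\tau) + \E_{a\sim\pi^*}[Q^{i,k}_{h+1}(x_{h+1}^\tau,a)]$. Since rewards lie in $[0,1]$ and $Q^{i,k}_{h+1}$ is clipped at $H$ by construction (with the boundary case $Q^{i,k}_{H+1}\equiv 0$ handled by line 4), a convex combination over $\A$ gives $0 \leq y_h^{i,k,\tau} \leq 1+H$.

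Next I dualize. For any test vector $v\in\R^d$, write
$$v^\top w^{i,k}_h = \sum_{\tau=1}^{k-1} y_h^{i,k,\tau} \cdot \bigl(v^\top (\Lambda_h^k)^{-1} \phi_h^\tau\bigr),$$
and apply Cauchy–Schwarz in the inner product induced by the PSD matrix $(\Lambda_h^k)^{-1}$ to each term, yielding $|v^\top (\Lambda_h^k)^{-1}\phi_h^\tau| \leq \|v\|_{(\Lambda_h^k)^{-1}} \|\phi_h^\tau\|_{(\Lambda_h^k)^{-1}}$. A second Cauchy–Schwarz, this time over the index $\tau\in[k-1]$, combined with the standard trace identity
$$\sum_{\tau=1}^{k-1} (\phi_h^\tau)^\top (\Lambda_h^k)^{-1} \phi_h^\tau = \tr\bigl((\Lambda_h^k)^{-1}(\Lambda_h^k-\lambda I_d)\bigr) \leq d,$$
then gives $|v^\top w^{i,k}_h| \leq (1+H)\sqrt{d(k-1)} \cdot \|v\|_{(\Lambda_h^k)^{-1}}$. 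Choosing $v = w^{i,k}_h$ and using $\lambda_{\min}(\Lambda_h^k)\geq\lambda$ (so that $\|v\|_{(\Lambda_h^k)^{-1}}\leq \|v\|/\sqrt{\lambda}$), I divide through by $\|w^{i,k}_h\|$ to obtain the claim.

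There is no genuine obstacle here; the argument is essentially mechanical and the lemma statement itself flags that it is an adaptation. The only point worth double-checking is the uniform boundedness of $y_h^{i,k,\tau}$ by $1+H$, which hinges on the clipping operation applied inside the $\min\{\cdot,H\}$ in line 10 — without this clipping, $Q^{i,k}_{h+1}$ could grow and the proof would collapse. Everything else — Cauchy–Schwarz, the trace bound, and the passage from $(\Lambda_h^k)^{-1}$-norm to Euclidean norm — is routine linear algebra that transfers verbatim from the single-agent analysis.
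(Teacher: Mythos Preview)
Your proposal is correct and mirrors the paper's own proof essentially step for step: bound the scalar targets by $1+H$ via clipping, apply Cauchy--Schwarz in the $(\Lambda_h^k)^{-1}$ inner product and then over $\tau$, invoke the trace bound $\sum_\tau (\phi_h^\tau)^\top(\Lambda_h^k)^{-1}\phi_h^\tau\leq d$, and finish with $(\Lambda_h^k)^{-1}\preceq \lambda^{-1}I_d$. The only cosmetic difference is that the paper concludes via $\norm{w^{i,k}_h}=\sup_{\norm{v}=1}|v^\top w^{i,k}_h|$ whereas you substitute $v=w^{i,k}_h$ and divide; these are equivalent.
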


Now we use Lemma~\ref{lem:bound-import-app} and Lemma~\ref{lem:self_norm_covering} to prove a useful concentration bound for \algbrev.

\begin{lemma}[Concentration bound on value functions for \algbrev ~-- Lemma~B.3 in~\citep{CJ-ZY-ZW-MIJ:20}] \label{lem:stochastic_term}
Consider the setting of Theorem~\ref{thm:main-nashQ}. There exists an absolute constant $C$ independent of $c_{\beta}$ such that for any fixed $\delta\in(0, 1)$, the following event $\mathcal{E}_i$ holds with probability at least $1-\delta$ for a fixed $i\in[n]$: for every $(k, h)\in [K]\times [H]$,
\begin{equation*}
\norm{\sum_{\tau = 1}^{k-1} \phi^{\tau}_h [V^{i,k}_{h+1}(x^{\tau}_{h+1}) - \Pe_h V^{i,k}_{h+1}(x_h^{\tau}, a_h^{\tau})]}_{(\Lambda^k_h)^{-1}}
\leq CdH\sqrt{\log [(c_\beta+1)dKH/\delta]}. 
\end{equation*}
\end{lemma}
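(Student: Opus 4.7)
The plan is to establish this uniform concentration bound by combining the self-normalized concentration inequality in Lemma~\ref{lem:self_norm_covering} with the covering-number estimate in Lemma~\ref{lem:bound-import-app}, following the template from~\citep{CJ-ZY-ZW-MIJ:20} but with modifications to accommodate the Nash equilibria of the stage games.

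First I would show that for every $(k,h)\in[K]\times[H]$, the function $V^{i,k}_{h+1}$ lies in the class $\V_i$ from Lemma~\ref{lem:bound-import-app}. This is the step where the Nash equilibrium structure of \algbrev~enters: by line~14 of Algorithm~\ref{alg:main}, $\pi^k_{h+1}(x)$ is a Nash equilibrium of the stage game $(Q^{1,k}_{h+1}(x,\cdot),\dots,Q^{n,k}_{h+1}(x,\cdot))$, so by definition agent $i$'s marginal $\pi^k_{i,h+1}(x)$ is a best response to $\pi^k_{-i,h+1}(x)$. Hence
\begin{equation*}
V^{i,k}_{h+1}(x)=\E_{a\sim\pi^k_{h+1}(x)}[Q^{i,k}_{h+1}(x,a)]=\max_{\nu\in\Delta(\A_i)}\E_{a_i\sim\nu,\,a_{-i}\sim\pi^k_{-i,h+1}(x)}[Q^{i,k}_{h+1}(x,a)].
\end{equation*}
Substituting the explicit form of $Q^{i,k}_{h+1}$ from line~10 of Algorithm~\ref{alg:main} identifies $V^{i,k}_{h+1}\in\V_i$ with $\bar{w}_i=w^{i,k}_{h+1}$ and $\bar{\Lambda}=\Lambda^k_{h+1}$.

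Next, I would quantify the parameters: by Lemma~\ref{lem:wn_estimate}, $\norm{w^{i,k}_{h+1}}\leq(1+H)\sqrt{d(k-1)/\lambda}\leq 2H\sqrt{dK}$ (with $\lambda=1$), so I take $L=2H\sqrt{dK}$, while the minimum eigenvalue of $\Lambda^k_{h+1}$ is at least $1$. Lemma~\ref{lem:bound-import-app} then yields
\begin{equation*}
\log\mathcal{N}_\epsilon\leq d\log(1+8H\sqrt{dK}/\epsilon)+d^2\log[1+8d^{1/2}\beta^2/\epsilon^2].
\end{equation*}
Taking $\F_\tau$ to be the $\sigma$-algebra generated by all information collected up through step $h+1$ of episode $\tau$, we have $\phi^\tau_h\in\F_{\tau-1}$ and $x^\tau_{h+1}\in\F_\tau$, as required by Lemma~\ref{lem:self_norm_covering}. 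Applying that lemma to each element of an $\epsilon$-net of $\V_i$ and taking a union bound over the net and over $(k,h)\in[K]\times[H]$ gives, with probability at least $1-\delta$,
\begin{equation*}
\Big\|\sum_{\tau=1}^{k-1}\phi^\tau_h[V^{i,k}_{h+1}(x^\tau_{h+1})-\Pe_h V^{i,k}_{h+1}(x^\tau_h,a^\tau_h)]\Big\|^2_{(\Lambda^k_h)^{-1}}\leq 4H^2\Big[\tfrac{d}{2}\log(1+K)+\log(KH\mathcal{N}_\epsilon/\delta)\Big]+8K^2\epsilon^2.
\end{equation*}

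Finally, choosing $\epsilon=dH/K$ balances the two error terms, and plugging in $\beta=c_\beta dH\sqrt{\iota}$ collapses every logarithm into one of the form $\log[(c_\beta+1)dKH/\delta]$; the dominant contribution is the $d^2\log[1+8d^{1/2}\beta^2/\epsilon^2]$ piece of $\log\mathcal{N}_\epsilon$, which after taking the square root yields the bound $CdH\sqrt{\log[(c_\beta+1)dKH/\delta]}$ for a suitable absolute constant $C$. The main subtlety is the first step: verifying that the per-agent value function induced by a Nash equilibrium policy fits into the best-response form encoded in $\V_i$. Once that is in hand, the remainder is a routine quantitative adaptation of~\citep[Lemma~B.3]{CJ-ZY-ZW-MIJ:20}, with the $\max$ over a single action there replaced by the $\max_{\nu\in\Delta(\A_i)}$ that reflects the best-response view of Nash equilibria.
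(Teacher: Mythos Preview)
Your proposal is correct and follows essentially the same route as the paper: invoke Lemma~\ref{lem:self_norm_covering} with the class $\V_i$, plug in the covering-number bound from Lemma~\ref{lem:bound-import-app} together with the parameter bound $L$ from Lemma~\ref{lem:wn_estimate}, choose $\epsilon=dH/K$, and simplify the logarithms. Two cosmetic differences: (i) you make explicit the reason $V^{i,k}_{h+1}\in\V_i$ via the best-response characterization of a Nash equilibrium, which the paper leaves as a one-line ``realization''; (ii) you insert a union bound over $(k,h)\in[K]\times[H]$, whereas the paper does not---note that uniformity in $k$ is already delivered by the time-uniform self-normalized bound inside Lemma~\ref{lem:self_norm_covering}, so only a union over $h\in[H]$ is actually needed, and the extra $\log(KH)$ is harmlessly absorbed into $\log[(c_\beta+1)dKH/\delta]$.
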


The following lemma crucially depends on the principle of optimism.  

\begin{lemma}[Difference with an arbitrary Q-function -- Lemma~B.4 in~\citep{CJ-ZY-ZW-MIJ:20}]
\label{lem:basic_relation} Consider the setting of Theorem~\ref{thm:main-nashQ}. There exists an absolute constant $c_\beta$ such that for $\beta = c_\beta dH\sqrt{\iota}$ with $\iota = \log (dKH/\delta)$ and
any fixed joint policy $\bar{\pi}$, such that for any $i\in[n]$: given the event $\mathcal{E}_i$ defined in Lemma \ref{lem:stochastic_term}, we have for all $(x, a, h, k) \in \S\times\A\times[H]\times[K]$ that
\begin{equation*}
\langle\phi(x, a), w^{i,k}_h\rangle - Q_h^{i,\bar{\pi}}(x, a)  =  \Pe_h (V^{i,k}_{h+1} - V^{i,\bar{\pi}}_{h+1})(x, a) + \Delta^{i,k}_h(x, a),
\end{equation*}
for some $\Delta^{i,k}_h(x, a)$ such that $|\Delta^{i,k}_h(x, a)| \leq \beta \sqrt{\phi(x, a)^\top (\Lambda^k_h)^{-1}  \phi(x, a)}$.
\end{lemma}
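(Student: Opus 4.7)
The plan is to mirror the linear-MDP Bellman closure argument of \citet{CJ-ZY-ZW-MIJ:20}, with the multi-agent Nash policy entering only through the form of the value function. The starting observation is that the linear MG assumption makes $Q_h^{i,\bar{\pi}}$ itself linear in $\phi$: substituting $r_h^i(x,a)=\langle\phi(x,a),\theta_h^i\rangle$ and $\P_h(x'\mid x,a)=\langle\phi(x,a),\mu_h(x')\rangle$ in the Bellman equation $Q_h^{i,\bar{\pi}}(x,a)=r_h^i(x,a)+\Pe_h V_{h+1}^{i,\bar{\pi}}(x,a)$ yields $Q_h^{i,\bar{\pi}}(x,a)=\langle\phi(x,a),w_h^{i,\bar{\pi}}\rangle$ with $w_h^{i,\bar{\pi}}=\theta_h^i+\int V_{h+1}^{i,\bar{\pi}}(x')\,d\mu_h(x')$ and, by Proposition~\ref{prop:lin-Q}, $\|w_h^{i,\bar{\pi}}\|\le 2H\sqrt{d}$. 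The lemma then measures how far the regularized least-squares estimator $w_h^{i,k}$ lies from this representer when viewed through $\phi(x,a)$.

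To obtain the claimed identity, I would start from $w_h^{i,\bar{\pi}}=(\Lambda_h^k)^{-1}\Lambda_h^k w_h^{i,\bar{\pi}}$, expand $\Lambda_h^k=\lambda I_d+\sum_{\tau<k}\phi_h^\tau(\phi_h^\tau)^\top$, use the linearity above to rewrite $(\phi_h^\tau)^\top w_h^{i,\bar{\pi}}=r_h^i(x_h^\tau,a_h^\tau)+\Pe_h V_{h+1}^{i,\bar{\pi}}(x_h^\tau,a_h^\tau)$, and subtract from the definition of $w_h^{i,k}$ in line~9 of Algorithm~\ref{alg:main}. Adding and subtracting $\Pe_h V_{h+1}^{i,k}(x_h^\tau,a_h^\tau)$ inside the sum then produces
\begin{align*}
w_h^{i,k}-w_h^{i,\bar{\pi}}
&=-\lambda(\Lambda_h^k)^{-1}w_h^{i,\bar{\pi}}
+(\Lambda_h^k)^{-1}\sum_{\tau<k}\phi_h^\tau\bigl[V_{h+1}^{i,k}(x_{h+1}^\tau)-\Pe_h V_{h+1}^{i,k}(x_h^\tau,a_h^\tau)\bigr]\\
&\quad+(\Lambda_h^k)^{-1}\sum_{\tau<k}\phi_h^\tau(\phi_h^\tau)^\top u,
\end{align*}
where $u=\int(V_{h+1}^{i,k}-V_{h+1}^{i,\bar{\pi}})(x')\,d\mu_h(x')$ arises because the linear MG structure collapses $\Pe_h(V_{h+1}^{i,k}-V_{h+1}^{i,\bar{\pi}})(x_h^\tau,a_h^\tau)$ into $\langle\phi_h^\tau,u\rangle$. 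Applying the identity $(\Lambda_h^k)^{-1}\sum_{\tau<k}\phi_h^\tau(\phi_h^\tau)^\top=I_d-\lambda(\Lambda_h^k)^{-1}$ to the last piece extracts the desired main term $\langle\phi(x,a),u\rangle=\Pe_h(V_{h+1}^{i,k}-V_{h+1}^{i,\bar{\pi}})(x,a)$ after taking inner products with $\phi(x,a)$, and the three remaining pieces collect into $\Delta_h^{i,k}(x,a)$.

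Each residual piece is then controlled by the Cauchy--Schwarz-type bound $|\langle\phi(x,a),(\Lambda_h^k)^{-1}v\rangle|\le\|\phi(x,a)\|_{(\Lambda_h^k)^{-1}}\|v\|_{(\Lambda_h^k)^{-1}}$. The first piece uses $\|w_h^{i,\bar{\pi}}\|\le 2H\sqrt{d}$ together with $(\Lambda_h^k)^{-1}\preceq\lambda^{-1}I_d$ to give $O(H\sqrt{d\lambda})$; the third piece uses $\|u\|\le H\,\|\mu_h(\S)\|\le H\sqrt{d}$ to give another $O(H\sqrt{d\lambda})$; the second, stochastic, piece is exactly the quantity bounded on the event $\mathcal{E}_i$ of Lemma~\ref{lem:stochastic_term}, contributing $O(dH\sqrt{\iota})$. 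Summing and choosing $\lambda=1$, all three terms are absorbed into $\beta=c_\beta dH\sqrt{\iota}$ for a large enough absolute constant $c_\beta$.

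The hard part is making the noise bound legitimate: Lemma~\ref{lem:stochastic_term} requires $V_{h+1}^{i,k}$ to lie in the function class $\mathcal{V}_i$ of Lemma~\ref{lem:bound-import-app} so that the covering number controls the concentration uniformly in $k$. This reduces to the observation that, at any stage-game Nash equilibrium, agent $i$'s Nash value coincides with its best-response value, i.e., $\E_{a\sim\pi^k_{h+1}(x)}[Q_{h+1}^{i,k}(x,a)]=\max_{\nu\in\Delta(\A_i)}\E_{a_i\sim\nu,\,a_{-i}\sim\pi^k_{-i,h+1}(x)}[Q_{h+1}^{i,k}(x,a)]$, which matches exactly the $\max_\nu$ in the definition of $\mathcal{V}_i$. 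Once that point is verified, the rest is a purely mechanical transplant of the single-agent calculation, and the multi-agent structure does not otherwise enter the argument.
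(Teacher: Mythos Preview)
Your proposal is correct and follows essentially the same route as the paper: the same three-term decomposition of $w_h^{i,k}-w_h^{i,\bar{\pi}}$ into a ridge-bias piece, a self-normalized noise piece controlled on $\mathcal{E}_i$, and a $\Pe_h$-difference piece that splits off the main term via $(\Lambda_h^k)^{-1}\sum_\tau\phi_h^\tau(\phi_h^\tau)^\top=I_d-\lambda(\Lambda_h^k)^{-1}$, with each residual bounded by Cauchy--Schwarz in the $(\Lambda_h^k)^{-1}$-norm. Your closing remark that $V_{h+1}^{i,k}\in\mathcal{V}_i$ because a Nash strategy is a best response is exactly the mechanism the paper uses, though in the paper that verification lives in the proof of Lemma~\ref{lem:stochastic_term} rather than here; once $\mathcal{E}_i$ is assumed, the present lemma is purely deterministic and your sketch matches the paper's argument line for line.
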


The following key lemma makes use of optimism by using Lemma~\ref{lem:basic_relation} and of the fact that we choose a Nash equilibrium at each stage game.

\begin{lemma}[Optimism bounds]
\label{lem:optimism_bound_app} Consider the setting of Theorem~\ref{thm:main-nashQ}. Given the event $\mathcal{E}_i$ defined in Lemma \ref{lem:stochastic_term}, we have that for all $(x, a, h, k) \in \S\times\A\times[H]\times[K]$, $$Q^{i,\bre(\pi_{-i}^k),\pi_{-i}^k}_h(x,a)\leq Q^{i,k}_h(x,a)\quad \text{ and }\quad V^{i,\bre(\pi_{-i}^k),\pi_{-i}^k}_h(x)\leq V^{i,k}_h(x).$$
\end{lemma}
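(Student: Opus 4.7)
The natural approach is backward induction on $h$, with the two inequalities $Q_h^{i,\mathrm{br}(\pi^k_{-i}),\pi^k_{-i}}\leq Q_h^{i,k}$ and $V_h^{i,\mathrm{br}(\pi^k_{-i}),\pi^k_{-i}}\leq V_h^{i,k}$ proved in tandem and coupling back via the Bellman recursion. I will fix $i\in[n]$, $k\in[K]$, and condition throughout on the event $\mathcal{E}_i$ so that Lemma~\ref{lem:basic_relation} is available. Write $\bar\pi:=(\mathrm{br}(\pi^k_{-i}),\pi^k_{-i})$ for brevity. At the base step $h=H+1$ both sides vanish by construction (line~4 of Algorithm~\ref{alg:main} and the terminal condition for value functions), so the inequalities hold trivially.

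For the inductive step, assume the claim holds at $h+1$. Apply Lemma~\ref{lem:basic_relation} with the fixed policy $\bar\pi$:
$$
\langle\phi(x,a),w^{i,k}_h\rangle - Q_h^{i,\bar\pi}(x,a) \;=\; \Pe_h\bigl(V_{h+1}^{i,k}-V_{h+1}^{i,\bar\pi}\bigr)(x,a) + \Delta^{i,k}_h(x,a),
$$
with $|\Delta^{i,k}_h(x,a)|\leq \beta\sqrt{\phi(x,a)^\top(\Lambda^k_h)^{-1}\phi(x,a)}$. The inductive hypothesis at $h+1$ makes the transition term nonnegative, and the magnitude of $\Delta^{i,k}_h$ is exactly what the optimism bonus in line~10 of Algorithm~\ref{alg:main} compensates for. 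Combining these two facts yields
$$
\langle\phi(x,a),w^{i,k}_h\rangle + \beta\sqrt{\phi(x,a)^\top(\Lambda^k_h)^{-1}\phi(x,a)} \;\geq\; Q_h^{i,\bar\pi}(x,a).
$$
Since rewards are in $[0,1]$ we also have $Q_h^{i,\bar\pi}(x,a)\leq H$, so the truncation $\min\{\cdot,H\}$ in line~10 preserves the inequality, giving $Q_h^{i,k}(x,a)\geq Q_h^{i,\bar\pi}(x,a)$.

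For the value-function half I will exploit two facts: $\pi^k_h(x)$ is a Nash equilibrium of the stage game $(Q^{j,k}_h(x,\cdot))_{j\in[n]}$, and $\bar\pi_i=\mathrm{br}(\pi^k_{-i})$ maximizes agent $i$'s value against $\pi^k_{-i}$. The best-response characterization gives
$$
V_h^{i,\bar\pi}(x) \;=\; \max_{\nu\in\Delta(\A_i)}\, \E_{a_i\sim\nu,\, a_{-i}\sim\pi^k_{-i,h}(x)}\!\bigl[Q_h^{i,\bar\pi}(x,a)\bigr],
$$
while the Nash property at the stage game yields
$$
V_h^{i,k}(x)\;=\;\E_{a\sim\pi^k_h(x)}[Q_h^{i,k}(x,a)] \;\geq\; \max_{\nu\in\Delta(\A_i)}\, \E_{a_i\sim\nu,\, a_{-i}\sim\pi^k_{-i,h}(x)}\!\bigl[Q_h^{i,k}(x,a)\bigr].
$$
Plugging the already-proved bound $Q_h^{i,k}\geq Q_h^{i,\bar\pi}$ into the right-hand side and comparing to the previous display gives $V_h^{i,k}(x)\geq V_h^{i,\bar\pi}(x)$, completing the induction.

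\textbf{Main obstacle.} The only subtle point is the value-function step: one must be careful that $\bar\pi$ is a product policy (so that $a_{-i}$ indeed marginalizes as $\pi^k_{-i,h}(x)$) and that the same $\pi^k_{-i}$ appears both in the Nash deviation inequality for $\pi^k_h$ and in the best-response definition — this is exactly why the stage-game Nash equilibrium was chosen as a product strategy and why Lemma~\ref{lem:basic_relation} must be applied with the specific comparator $\bar\pi=(\mathrm{br}(\pi^k_{-i}),\pi^k_{-i})$ rather than an arbitrary policy. The Q-function step is essentially a direct application of Lemma~\ref{lem:basic_relation} plus the truncation, so no genuine new difficulty arises there.
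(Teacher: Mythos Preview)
Your proposal is correct and follows essentially the same approach as the paper's proof: backward induction on $h$, using Lemma~\ref{lem:basic_relation} together with the induction hypothesis on $V_{h+1}$ to obtain the $Q_h$ bound, and then using the Nash-equilibrium property of $\pi^k_h$ combined with the best-response characterization of $\bar\pi$ to pass from the $Q_h$ bound to the $V_h$ bound. The only cosmetic difference is that the paper carries just the $Q$-inequality as the induction hypothesis and derives the $V$-inequality at $h+1$ as the first step of each inductive round, whereas you bundle both inequalities into the hypothesis; the logical content is identical.
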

\begin{proof}
We prove the claims by induction in $h=H+1,\dots,1$. The base case $H+1$ is trivial, since $Q^{i,\bre(\pi_{-i}^k),\pi_{-i}^k}_{H+1}(x,a)= Q^{i,k}_{H+1}(x,a)=0$. Now, at step $h+1$ we have the induction hypothesis $Q^{i,\bre(\pi_{-i}^k),\pi_{-i}^k}_{h+1}(x,a)\leq Q^{i,k}_{h+1}(x,a)$. Then we have that 
\begin{equation}
\label{eq:boundV-at-h1}
\begin{aligned}
V^{i,\bre(\pi_{-i}^k),\pi_{-i}^k}_{h+1}(x)&\overset{(a)}{=}\max_{\nu\in\Delta(\A_i)}\E_{\substack{a_i\sim\nu\\a_{-i}\sim\pi^k_{-i,h+1}(x)}}[Q^{i,\bre(\pi_{-i}^k),\pi_{-i}^k}_{h+1}(x,a)]\\
&\overset{(b)}{\leq}\max_{\nu\in\Delta(\A_i)}\E_{\substack{a_i\sim\nu\\a_{-i}\sim\pi^k_{-i,h+1}(x)}}[Q^{i,k}_{h+1}(x,a)]\\
&\overset{(c)}{=}\E_{a\sim\pi^k_{h+1}(x)}[Q^{i,k}_{h+1}(x,a)]\\
&=V^{i,k}_{h+1}(x,a),
\end{aligned}
\end{equation}
where (a) follows by definition of best response, (b) from the induction hypothesis, and also (a)-(c) altogether from the fact that \algbrev ~chooses a Nash equilibrium at every stage game.

Now, we have
\begin{equation}
\begin{aligned}
Q^{i,\bre(\pi_{-i}^k),\pi_{-i}^k}_{h}(x,a)&\overset{(a)}{\leq}
\langle\phi(x, a), w^{i,k}_h\rangle +  \Pe_h (V^{i,k}_{h+1} - V^{i,\bre(\pi_{-i}^k),\pi_{-i}^k}_{h+1})(x, a) + \beta \sqrt{\phi(x,a)^\top (\Lambda^k_h)^{-1}  \phi(x,a)}\\
&\overset{(b)}{\leq} 
\langle\phi(x, a), w^{i,k}_h\rangle + \beta \sqrt{\phi(x,a)^\top (\Lambda^k_h)^{-1}  \phi(x,a)}\\
\overset{(c)}{\implies}Q^{i,\bre(\pi_{-i}^k),\pi_{-i}^k}_{h}(x,a)&\leq \min\{\langle\phi(x, a), w^{i,k}_h\rangle + \beta \sqrt{\phi(x,a)^\top (\Lambda^k_h)^{-1}  \phi(x,a)},H\}\\
&=Q^{i,k}_h(x,a),
\end{aligned}
\end{equation}
where (a) follows from Lemma~\ref{lem:basic_relation}, (b) from~\eqref{eq:boundV-at-h1}, and (c) from $Q^{i,\bre(\pi_{-i}^k),\pi_{-i}^k}_{h}\leq H$. From here we can repeat the steps in~\eqref{eq:boundV-at-h1} to obtain $V^{i,\bre(\pi_{-i}^k),\pi_{-i}^k}_h(x)\leq V^{i,k}_h(x)$. This finishes the proof. 
\end{proof}

\subsection{Proof of Theorem~\ref{thm:main-nashQ}}
Let us first condition on the event $\bigcap_{i=1}^n\mathcal{E}_i$ where $\mathcal{E}_i$ is defined in Lemma~\ref{lem:stochastic_term}. Since $\Pe[\text{not }  \mathcal{E}_i]\leq \delta$, applying union bound let us conclude that $\Pe[\bigcap_{i\in[n]}\mathcal{E}_i]\geq 1-n\delta
$.

For any $k\in[K]$, given the policy $\pi^k=\{\pi^k_i\}_{i\in[n]}$ defined by \algbrev, we define the functions $\hat{Q}_h^k$ 
and $\hat{V}_h^k$ 
recursively as: $\hat{V}_{H+1}^k(x)=\hat{Q}_{H+1}^k(x)=0$ and 
\begin{align*}
\hat{Q}_h^k(x,a) &= 
\Pe_h\hat{V}^k_{h+1}(x,a)+2\beta\sqrt{(\phi^k_h)^\top(\Lambda^k_h)^{-1}\phi^k_h},\\
\hat{V}_h^k(x)&=\E_{a\sim\pi^k_h}[\hat{Q}_h^k(x,a)]
\end{align*}
for any $h= H,\dots,1$ and $(x,a)\in\S\times\A$. Notice that since $2\beta\sqrt{(\phi^k_h)^\top(\Lambda^k_h)^{-1}\phi^k_h}\leq 2\beta\sqrt{(\phi^k_h)^\top\phi^k_h}=2\beta\norm{\phi^k_h}\leq 2\beta$, we have that $\hat{Q}_h^k$ 
and $\hat{V}_h^k$ are nonnegative with maximum value $2\beta H$.

Let $k\in[K]$. We claim that for any $(h,x,a)\in[H]\times\S\times\A$, 
\begin{equation}
\label{eq:claim-upper-bound-app}
\begin{aligned}
        \max_{i\in[n]}(Q^{i,k}_h(x,a)-Q^{i,\pi^k}_h(x,a))&\leq \hat{Q}_h^k(x,a)\text{, and}\\
        \max_{i\in[n]}(V^{i,k}_h(x)-V^{i,\pi^k}_h(x))&\leq \hat{V}_h^k(x).
\end{aligned}
\end{equation}
We prove the claim by induction in $h= H+1,\dots,1$. The base case $H+1$ is trivial, since $Q^{i,k}_{H+1}(x,a)=Q^{i,\pi^k}_{H+1}(x,a)=\hat{Q}^k_{H+1}(x,a)=0$ for every $i\in[n]$. Now, at step $h+1$ we have the induction hypothesis $\max_{i\in[h]}(Q^{i,k}_{h+1}(x,a)-Q^{i,\pi^k}_{h+1}(x,a))\leq \hat{Q}_{h+1}^k(x,a)$. Taking expectations over $a\sim\pi^k_{h+1}(x)$ let us immediately obtain 
\begin{equation}
\label{eq:max_Vi}
\max_{i\in[h]}(V^{i,k}_{h+1}(x)-V^{i,\pi^k}_{h+1}(x))\leq \hat{V}_{h+1}^k(x).    
\end{equation} 

Now, for any $i\in[n]$,
\begin{equation}
\begin{aligned}
Q^{i,k}_h(x,a)-Q^{i,\pi^k}_h(x,a)&= \min\{(w^{i,k}_h)^\top\phi(x,a)+\beta\sqrt{\phi(x,a)^\top(\Lambda_h^k)^{-1}\phi(x,a)},H\}-Q^{i,\pi^k}_h(x,a)\\
&\overset{(a)}{\leq}
\Pe_h (V^{i,k}_{h+1} - V^{i,\pi^k}_{h+1})(x, a) + 2\beta \sqrt{\phi(x,a)^\top (\Lambda^k_h)^{-1}  \phi(x,a)}\\
&\overset{(b)}{\leq} 
\Pe_h\hat{V}^{k}_{h+1}(x,a) + 2\beta \sqrt{\phi(x,a)^\top (\Lambda^k_h)^{-1}  \phi(x,a)}\\
&=\hat{Q}^{k}_h(x,a),
\end{aligned}
\end{equation}
where (a) follows from Lemma~\ref{lem:basic_relation} and (b) from \eqref{eq:max_Vi}. Taking expectations let us obtain  $V^{i,k}_h(x)-V^{i,\pi^k}_h(x)\leq \hat{V}^{k}_h(x)$. This finishes the proof for the claim in~\eqref{eq:claim-upper-bound-app}.

We now introduce the following notation: $\delta^{k}_h := \E_{a\sim\pi^k_h}[\hat{Q}^k_h(x^k_h,a)]-\hat{Q}^k_h(x^k_h,a^k_h)$, and $\xi^k_{h+1} := 
\Pe_h\hat{V}^k_{h+1}(x^k_h,a^k_h) - \hat{V}^k_{h+1}(x^k_{h+1})$ with $\xi^k_1:=0$. Then, for any $(h,k) \in [H] \times [K]$, 
\begin{align*}
    \hat{V}^k_h(x^k_h)&=\E_{a\sim\pi^k_h(x^k_h)}[\hat{Q}^k_h(x^k_h,a)]\\
    &=\delta^k_h+\hat{Q}^k_h(x^k_h,a^k_h)\\
    &=\delta^k_h+\Pe_h\hat{V}^k_{h+1}(x^k_h,a^k_h)+2\beta\sqrt{(\phi^k_h)^\top(\Lambda^k_h)^{-1}\phi^k_h}\\
    &=\delta^k_h+\xi^k_{h+1}+2\beta\sqrt{(\phi^k_h)^\top(\Lambda^k_h)^{-1}\phi^k_h}+\hat{V}^k_{h+1}(x^k_{h+1}).
\end{align*}

Now, let us focus on the regret performance metric.
\begin{equation}
\label{eq:regret_prev_p_app}
\begin{aligned}
\textnormal{Regret}(K) &=  \sum_{k=1}^K\max_{i\in[n]}(V_1^{i,\bre(\pi^k_{-i}),\pi^k_{-i}}(s_o) - V_1^{i,\pi^k}(s_o))\\
&\overset{(a)}{\leq}\sum_{k=1}^{K} \max_{i\in[n]}(V^{i,k}_1(s_o) - V^{i,\pi^k}_1 (s_o))\\
&\overset{(b)}{\leq}\sum_{k=1}^{K} \hat{V}^{k}_1(s_o)
\\
&=\underbrace{\sum_{k=1}^K\sum_{h=1}^H\xi^k_h}_{\textrm{(I)}}+\underbrace{\sum_{k=1}^{K}\sum_{h=1}^H \delta^{k}_{h}}_{\textrm{(II)}} + \underbrace{2\beta \sum_{k=1}^{K}\sum_{h=1}^H \sqrt{(\phi^{k}_h)^\top (\Lambda^k_h)^{-1}\phi^{k}_h}}_{\textrm{(III)}},
\end{aligned}
\end{equation}
where (a) follows from Lemma~\ref{lem:optimism_bound_app} and the fact that we are conditioned on the event $\bigcap^n_{i=1}\mathcal{E}_i$; (b) follows from~\eqref{eq:claim-upper-bound-app}.

We first analyze the term (I) from~\eqref{eq:regret_prev_p_app}. 
Let us define the filtration $\{\F_{(k,h)}\}_{(k,h)\in\L^\star}$ where
$\L^\star$ is a sequence such that $\L^\star\subset\mathbb{Z}_{\geq 1}\times[H]$ and its elements are arranged as follows. Firstly, we let the second coordinate take values from $1$ to $H$ and repeat this periodically \emph{ad infinitum}, so that each period has $H$ elements of $\L^\star$. Finally, we let the first coordinate take the value corresponding to the current number of periods so far in the second coordinate (and so its value is unbounded). 
Consider any element $(k,h)\in\L^\star$. We denote by $(k,h)^{-1}$ its previous element in $\L^\star$. We let $\F_{(k,h)}$ contain the information of the tuple $(x^{\bar{k}}_{\bar{h}},a^{\bar{k}}_{\bar{h}})$ whose indexes $(\bar{k},\bar{h})$ belong to the set $\L^\star$ up to the element $(k,h)\in\L^\star$.

We then can conclude that $\{\xi^k_h\}_{(k,h)\in\L^\star}$ is a 
martingale difference sequence due to the following two properties:
\begin{enumerate}
    \item $\xi^k_h\in\F_{(k,h)^{-1}}$.
    For $h=1$, $\E[\xi^k_{h}|\F_{(k,h)^{-1}}]=0$ is trivial, so we focus on $h=2,\dots,H$. Then, since $x^{k}_h\sim\P_{h-1}(\cdot|x^{k}_{h-1},a^{k}_{h-1})$ (line 16 of \algbrev), we have $\E[\hat{V}^k_h(x^k_h)|\F_{(k,h)^{-1}}]=\E_{x'\sim\P_{h-1}(\cdot|x_{h-1}^{k},a_{h-1}^{k})}[\hat{V}^{k}_h(x')]=\Pe_{h-1}\hat{V}^k_h(x^k_{h-1},a^k_{h-1})$, which immediately implies $\E[\xi^k_{h}|\F_{(k,h)^{-1}}]=0$. 
    \item $|\xi^k_h|\leq 
    |\Pe_{h-1}\hat{V}^k_h(x^{k}_{h-1}, a^{k}_{h-1})| +|\hat{V}^k_h(x^k_h)|\leq 4\beta H <\infty$
    since $\hat{V}^k_h(x)\in[0,2\beta H]$ for any $x\in\S$.
\end{enumerate}
Therefore, we can use the Azuma-Hoeffding inequality to conclude that, for any $\epsilon > 0$,
\begin{equation*}
\Pr \left(\sum_{k=1}^{K}\sum_{h=1}^H  \xi^{k}_{h}> \epsilon \right) \leq \exp \bigg (\frac{-2 \epsilon^2 } {(KH)(16\beta^2H^2) } \bigg ).
\end{equation*}
We choose $\epsilon=\sqrt{8KH^3\beta^2\log\left(\frac{1}{\delta}\right)}$. Then, with probability at least $1 -\delta$,   
\begin{equation}
\label{eq:final2-app}
  \textrm{(I)}=\sum_{k=1}^{K}\sum_{h=1}^H  \xi^k_{h}\leq \sqrt{8KH^3\beta^2\log\left(\frac{1}{\delta}\right)} \leq 8\beta H\sqrt{KH\iota}, 
\end{equation}
recalling that $\iota = \log\left(\frac{dKH}{\delta}\right)$. We call $\bar{\mathcal{E}}$ the event such that~\eqref{eq:final2-app} holds.

The term (II) can be analyzed in a very similar way as in (I) to show that $\{\delta^k_h\}_{(k,h)\in\cL^*}$ is a martingale difference sequence, and thus obtain that with probability at least $1-\delta$,
\begin{equation}
\label{eq:final3-app}
  \textrm{(II)}=\sum_{k=1}^{K}\sum_{h=1}^H  \delta^k_{h}\leq 8\beta H\sqrt{KH\iota}. 
\end{equation}
We call $\tilde{\mathcal{E}}$ the event such that~\eqref{eq:final3-app} holds.

We now analyze the term (III) from~\eqref{eq:regret_prev_p_app}. Then for a fixed $h\in[H]$,
$$
\sum_{k=1}^{K}(\phi^{k}_h)^\top (\Lambda^k_h)^{-1}\phi^{k}_h
%
\leq 2\log\left[\frac{\det(\Lambda_h^{K+1})}{\det(\lambda I_d)}\right]
$$
where the inequality follows from the so-called elliptical potential lemma~\citep[Lemma~11]{YAY-DP-CS:11}, whose conditions are satisfied from our bounded sequence $\{\phi_h^{k}\}_{k=1}^K$ and the fact that the minimum eigenvalue of $\Lambda_h^k$ is lower bounded by $\lambda=1$ for every $(h,k)\in[H]\times[K]$. Now, we have that $\Lambda_h^{K+1}$ is a positive definite matrix whose maximum eigenvalue can be bounded as 
$\norm{\Lambda_h^{K+1}}\leq \norm{\sum_{k=1}^K\phi^{k}_h(\phi^{k}_h)^\top}+\lambda\leq K+\lambda$, and so $\det(\Lambda^{K+1}_h)\leq \det((K+\lambda)I_d)=(K+\lambda)^d$. We also have that $\det(\lambda I_d)=\lambda^d$. Then, we obtain that
\begin{equation}
\label{eq:aux_last_1}
\sum_{k=1}^{K}(\phi^{k}_h)^\top (\Lambda^k_h)^{-1}\phi^{k}_h
\leq
2\log\left[\frac{K+\lambda}{\lambda}\right]^d=
2d\log(K+1)\leq 2d\iota,
\end{equation}
where the last inequality holds since $\log(K+1)\leq \log\left(\frac{dKH}{\delta}\right)=\iota$ for $d\geq 2$. 

Now, going back to term (III), 
\begin{equation}
\label{eq:aux_last_2-app}
\textrm{(III)}=2\beta \sum_{h=1}^H\sum_{k=1}^{K} \sqrt{(\phi^{k}_h)^\top (\Lambda^k_h)^{-1}\phi^{k}_h}
  \overset{\textrm{(a)}}{\leq}2\beta \sum_{h=1}^H\sqrt{K}\sqrt{ \sum_{k=1}^K (\phi^{k}_h)^\top (\Lambda^k_h)^{-1}\phi^{k}_h}
\overset{(b)}{\leq} 2\beta H \sqrt{2dK\iota}, 
\end{equation}
where (a) follows from the Cauchy-Schwartz inequality, and (b) from~\eqref{eq:aux_last_1}.

Now, using the results in~\eqref{eq:final2-app}, \eqref{eq:final3-app}, and~\eqref{eq:aux_last_2-app} back in~\eqref{eq:regret_prev_p_app}, we conclude that,
\begin{multline}
\label{eq:regret_almost_l}
\textnormal{Regret}(K) \leq  
8\beta H\sqrt{KH\iota} + 8\beta H\sqrt{KH\iota}
+ 
2\beta H \sqrt{dK\iota}\\
=16c_\beta\sqrt{d^2 K H^5\iota^2}
+ 
2c_\beta\sqrt{ d^3KH^4\iota^2}
\overset{\textrm{(a)}}{\leq} 18c_\beta\sqrt{d^3KH^5\iota^2},
\end{multline}
where (a) follows from $\sqrt{\iota}\leq \iota$ which follows from equation~\eqref{eq:iota_bound}.

Finally, 
applying union bound let us conclude that 
$\Pe[\bigcap_{i\in[n]}\mathcal{E}_i\cap\bar{\mathcal{E}}\cap\tilde{\mathcal{E}}]\geq 1-(n+2)\delta
$,
i.e., our final result holds with probability at least $1-(n+2)\delta
$. This finishes the proof of Theorem~\ref{thm:main-nashQ}.\qed
%

\section{Remaining proofs}
\label{sec:remaining-proofs}

\begin{proof}[Proof of Lemma~\ref{lem:self_norm_covering}]
First, from our assumptions, for any $g \in \G$, there exists a $\tilde{g}$ in the $\epsilon$-covering such that
$g = \tilde{g} + \Delta_g$ with $\sup_{x\in\S} |\Delta_g(x)| \leq \epsilon$. Then,
\begin{equation}
\label{eq:conc-aux}
\begin{aligned}
&\norm{\sum_{\tau = 1}^A \phi_{\tau} \{ g(x_\tau) - \E[g(x_\tau)|\F_{\tau-1}] \}  }^2_{\Lambda_{A}^{-1}}\\
&\quad\leq  2\underbrace{\norm{\sum_{\tau = 1}^A \phi_{\tau} \{ \tilde{g}(x_{\tau}) - \E[\tilde{g}(x_{\tau})|\F_{\tau-1}]\} }^2_{\Lambda_{A}^{-1}}}_{\textrm{(I)}}
+2\underbrace{\norm{\sum_{\tau = 1}^A \phi_{\tau} \{ \Delta_g(x_{\tau}) - \E[\Delta_g(x_{\tau})|\F_{\tau-1}] \} }^2_{\Lambda_{A}^{-1}}}_{\textrm{(II)}},
\end{aligned}
\end{equation}
where we used $\norm{a+b}\leq \norm{a}+\norm{b}\implies \norm{a+b}^2\leq \norm{a}^2+\norm{b}^2+2\norm{a}\norm{b}\leq 2\norm{a}^2+2\norm{b}^2$ for any $a,b\in\R^d$, and which actually holds for any weighted Euclidean norm. 

We start by analyzing the term (I) in equation~\eqref{eq:conc-aux}. Let $\varepsilon_{\tau}:=\tilde{g}(x_{\tau}) - \E[\tilde{g}(x_{\tau})|\F_{\tau-1}]$. Now, we observe that 1) $\E[\varepsilon_{\tau}|\F_{\tau-1}]=0$ and 2) $\varepsilon_{\tau}\in[-H,H]$ since $\tilde{g}(x_{\tau})\in[0,H]$. From these two facts we obtain that $\varepsilon_{\tau}|\F_{\tau-1}$ is $H$-sub-Gaussian. Therefore we can apply the concentration bound of self-normalized processes from Theorem~1 of~\citep{YAY-DP-CS:11} along with a union bound over the $\epsilon$-covering of $\G$ to conclude that, with probability at least $1-\delta$,
\begin{multline}
\label{eq:upp-b-lem-aux1}
\textrm{(I)}=\norm{\sum_{\tau=1}^A\phi_{\tau}\varepsilon_{\tau}}_{\Lambda_{A}^{-1}}^2\leq \log\left(\frac{\det(\Lambda_{A})^{1/2}\det(\lambda I_d)^{-1/2}}{\delta/\N_\epsilon}\right)
\overset{\textrm{(a)}}{\leq} 2H^2\left(
\frac{d}{2}\log\left(\frac{\lambda+AB/d}{\lambda}\right)+\log\left(\frac{\N_\epsilon}{\delta}\right)
\right),
\end{multline}
where (a) follows from $\det(\lambda I_d)=\lambda^d$ and from the determinant-trace inequality from Lemma~10 in~\citep{YAY-DP-CS:11} which let us obtain $\det(\Lambda_{A})\leq(\lambda+AB/d)^d$.

Now we analyze the term (II) in equation~\eqref{eq:conc-aux}. Let $\bar{\varepsilon}_{\tau}:=\Delta_g(x_{\tau}) - \E[\Delta_g(x_{\tau})|\F_{\tau-1}]$. Then, 
\begin{equation*}
\norm{\sum_{\tau=1}^A\phi_{\tau}\bar{\varepsilon}_{\tau}}\leq \sum_{\tau=1}^A\norm{\phi_{\tau}\bar{\varepsilon}_{\tau}}\overset{\textrm{(a)}}{\leq} \sum_{\tau=1}^A|\bar{\varepsilon}_{\tau}|
\leq \sum_{\tau=1}^A|\Delta_g(x_{\tau})|+|\E[\Delta_g(x_{\tau})|\F_{\tau-1}]|\leq \sum_{\tau=1}^A 2\epsilon=2A\epsilon,
\end{equation*}
where (a) follows from $\norm{\phi_{\tau}}\leq 1$. Thus, using this result, we obtain
$$
\textrm{(II)}\leq\frac{1}{\lambda}\norm{\sum_{\tau=1}^A\phi_{\tau}\bar{\varepsilon}_{\tau}}^2\leq \frac{1}{\lambda}4A^2\epsilon^2.$$

We finish the proof by multiplying by two the terms (I) and (II), and then adding them up to use them as an upper bound to~\eqref{eq:conc-aux} .
\end{proof}

\begin{proof}[Proof of Lemma~\ref{lem:wn_estimate}]
For any vector $v \in \R^d$,
\begin{align*}
|v^\top w^{i,k}_h| & = |v^\top (\Lambda^k_h)^{-1} \sum_{\tau=1}^{k-1} \phi^{\tau}_h [r^i_h + \max_{\substack{a\sim\pi^*\\\pi^*\text{ as in line 7 of Algorithm~1}}} Q_{h+1}^{i,k}(x^{\tau}_{h+1}, a)]|\\
& \overset{\textrm{(a)}}{\leq}(1+H)\sum_{\tau = 1}^{k-1}  |v^\top (\Lambda^k_h)^{-1} \phi^{\tau}_h|\\
&\overset{(b)}{\leq} (1+H)\sqrt{ \bigg[ \sum_{\tau = 1}^{k-1}  v^\top (\Lambda^k_h)^{-1}v\bigg]  \biggl [ \sum_{\tau = 1}^{k-1}  (\phi^\tau_h)^\top (\Lambda^k_h)^{-1}\phi^\tau_h\bigg] }\\
& \overset{\textrm{(c)}}{\leq} (1+H)\sqrt{d}\sqrt{\sum_{\tau = 1}^{k-1}  v^\top (\Lambda^k_h)^{-1}v}\\
&\overset{\textrm{(d)}}{\leq}(1+H)\sqrt{\frac{d(k-1)}{\lambda}}\norm{v}, %
\end{align*}
where (a) follows from the bounded rewards and $Q^{i,k}_{h+1}(\cdot,\cdot)\leq H$; (b) from applying Cauchy-Schwarz twice as in the following series of inequalities: given $q = (q_1,\dots,q_m)$ and $q = (p_1,\dots,p_m)$ where $q_i$ and $p_i$ are vectors of same arbitrary dimension we have $\sum^m_{i=1}|q_i^\top p_i|\leq \sum^m_{i=1}\norm{q_i}\norm{p_i}\leq \sqrt{\sum^m_{i=1}\norm{q_i}}\sqrt{\sum^m_{i=1}\norm{p_i}}$ ; (c) follows from~\cite[Lemma~D.1]{CJ-ZY-ZW-MIJ:20}; and (d) from $(\Lambda_h^k)^{-1}\preceq \lambda^{-1}I_d$. The proof concludes by considering that $\norm{w^{i,k}_h} = \max_{v:\norm{v} = 1} |v^\top w^{i,k}_h|$.
\end{proof}

\begin{proof}[Proof of Lemma~\ref{lem:stochastic_term}]
We obtain that, with probability at least $1-\delta$, $\delta\in(0,1)$,
\begin{equation}
    \label{eq:bound_aux_long}
\begin{aligned}
&\norm{\sum_{\tau = 1}^{k-1} \phi^{\tau}_h [V^{i,k}_{h+1}(x^{\tau}_{h+1}) - \Pe_h V^{i,k}_{h+1}(x_h^{\tau}, a_h^{\tau})]}_{(\Lambda^k_h)^{-1}}^2\\
&\overset{\textrm{(a)}}{\leq} 4H^2 \left[ \frac{d}{2}\log\biggl( \frac{\lambda+(k-1)/d}{\lambda}\biggr )  + \log\N_\epsilon + \log\frac{1}{\delta}
\right]  + \frac{8(k-1)^2\epsilon^2}{\lambda}\\
&\overset{(b)}{\leq} 4H^2 \left[ \frac{d}{2}\log\biggl( \frac{\lambda+(k-1)/d}{\lambda}\biggr )  
+ d\log \left(1+ \frac{4(1+H) \sqrt{d(k-1)}}{\epsilon\sqrt{\lambda}}\right) \right. \\
&\quad\left. + d^2 \log\left( 1 + \frac{8 d^{1/2}\beta^2}{\lambda\epsilon^2}\right) + \log\frac{1}{\delta}
\right]  + \frac{8(k-1)^2\epsilon^2}{\lambda}
\end{aligned}
\end{equation}
where (a) is a direct application of Lemma~\ref{lem:self_norm_covering}; and (b) follows from the realization that, from lines 9 and 10 in Algorithm~1, $V^{i,k}_{h+1}(\cdot)\in\mathcal{V}$ with $\mathcal{V}$ as in Lemma~\ref{lem:bound-import-app} and so we can use the bound on the covering number derived in such lemma with $L=(1+H) \sqrt{\frac{d(k-1)}{\lambda}}$ by using the bound from Lemma~\ref{lem:wn_estimate}.

Recalling that $\lambda = 1$ and $\beta=c_\beta dH\iota$ with $\iota=\log(dKH/\delta)$ in the setting of Theorem~\ref{thm:main-nashQ}, we claim that, after setting $\epsilon = \frac{dH}{K}$ in our previous equation, there exists an absolute constant $C > 0$ independent of $c_\beta$ such that 
\begin{equation}
\label{eq:final-bound-1}
\norm{\sum_{\tau = 1}^{k-1} \phi^{\tau}_h [V^{i,k}_{h+1}(x^{\tau}_{h+1}) - \Pe_h V^{i,k}_{h+1}(x_h^{\tau}, a_h^{\tau})]}_{(\Lambda^k_h)^{-1}}^2 \leq C d^2  H^2 \log ((c_\beta+1)dKH/
\delta).
\end{equation}
Proving~\eqref{eq:final-bound-1} would conclude the proof.

We first introduce a couple of useful results:
\begin{align}
\label{eq:iota_bound}
&\iota^2=\log\left(\frac{dKH}{\delta}\right)\geq \log(dKH)\geq \log(4)>1,\\ 
\label{eq:upp_low_bound}
& \log \left(\frac{(c_\beta+1)dKH}{\delta}\right)= 
\log(c_\beta+1)+\iota\geq \iota>1.
\end{align}
Replacing $\lambda =1$ and $\epsilon=\frac{dH}{K}$ in the right-hand side of~\eqref{eq:bound_aux_long} and doing some algebraic calculations, let us conclude that
\begin{equation}
\label{eq:aux-first-res}
\begin{aligned}
\eqref{eq:bound_aux_long}
&\leq 4d^2H^2\left[\log\left( 1+\frac{K}{d}\right)  
+ \log \left(1+ \frac{8K^{3/2}}{d^{1/2}}\right) + \log\left(\frac{1}{\delta}\left( 1 + \frac{8 \beta^2K^2}{d^{3/2}H^2}\right)\right) \right] +8d^2H^2.
\end{aligned}
\end{equation}
Replacing $\beta=c_\beta dH\sqrt{\iota}$ in the previous expression and doing some algebraic work let us obtain
\begin{equation}
\label{eq:to_upper_bound}
\begin{aligned}
\eqref{eq:aux-first-res}&\leq \underbrace{8d^2H^2\log \left(1+ \frac{8K^{3/2}}{d^{1/2}}\right)}_{\textrm{(I)}} + 
\underbrace{4d^2H^2\log\left(\frac{1}{\delta}\left( 1 + 8 c_\beta^2d^{1/2}\iota K^2\right)\right)}_{\textrm{(II)}} 
+8d^2H^2\log\left(\frac{(c_\beta+1)dKH}{\delta}\right)
\end{aligned}
\end{equation}
where the inequality has made use of~\eqref{eq:upp_low_bound}. We now upper bound the terms highlighted in~\eqref{eq:to_upper_bound}.
Then, 
\begin{align*}
\textrm{(I)}&\leq 8d^2H^2\log \left(1+ 8K^{3/2}\right)\\
&\overset{\textrm{(a)}}{\leq} 8d^2H^2\log \left(\frac{(1+c_\beta)^2(dKH)^2}{\delta^2}\right)+8d^2H^2\log(9)\log \left(\frac{(c_\beta+1)dKH}{\delta}\right)\\
&=(16+8\log(9))d^2H^2\log \left(\frac{(c_\beta+1)dKH}{\delta}\right),
\end{align*}
where (a) follows from~\eqref{eq:upp_low_bound} and $c_\beta>0$. Similarly,
\begin{align*}
\textrm{(II)}
&\overset{\textrm{(a)}}{\leq} 
4d^2H^2\log\left(\frac{8(c_\beta+1)^2\iota(dKH)^2}{\delta}\right)\\
&\overset{(b)}{\leq} 
4d^2H^2\log\left(\frac{(c_\beta+1)^2\iota(dKH)^2}{\delta^2}\right)+4d^2H^2\log(8)\\
&=
4d^2H^2\log\left(\frac{(c_\beta+1)^2(dKH)^2}{\delta^2}\right)+4d^2H^2\log(\iota)+4d^2H^2\log(8)\\
&\overset{\textrm{(c)}}{\leq} 8d^2H^2\log\left(\frac{(c_\beta+1)dKH}{\delta}\right)+4d^2H^2\iota+4d^2H^2\log(8)\\
&\overset{\textrm{(d)}}{\leq} (12+4\log(8))d^2H^2\log\left(\frac{(c_\beta+1)dKH}{\delta}\right)
\end{align*}
where (a) follows from $c_\beta>0$, (b) from $\delta^2<\delta$, (c) from $\log(\iota)<\iota$ (since $\iota>1$ from~\eqref{eq:iota_bound}), and (d) from $\iota\leq \log\left(\frac{(c_\beta+1)dKH}{\delta}\right)$ and from~\eqref{eq:upp_low_bound}.

Now, joining the upper bounds for (I) and (II) in~\eqref{eq:to_upper_bound}, we finally obtain
\begin{equation*}
\norm{\sum_{\tau = 1}^{k-1} \phi^{\tau}_h [V^{i,k}_{h+1}(x^{\tau}_{h+1}) - \Pe_h V^{i,k}_{h+1}(x_h^{\tau}, a_h^{\tau})]}_{(\Lambda^k_h)^{-1}}^2 \leq 
(36+8\log(9)+4\log(8))d^2H^2 \log ((c_\beta+1)dKH/
\delta)
\end{equation*}
which proves the claim and thus the proof.
\end{proof}

\begin{proof}[Proof of Lemma~\ref{lem:basic_relation}]
For any $(i,k)\in[n]\times[K]$,
\begin{align*}
w^{i,k}_h -  w^{i,\bar{\pi}}_h
&= (\Lambda^k_h)^{-1} \sum_{\tau = 1}^{k-1} \phi^{\tau}_h (r^{\tau}_h + V^{i,k}_{h+1}(x^{\tau}_{h+1}))- w^{i,\bar{\pi}}_h  \\
&\overset{\textrm{(a)}}{=} (\Lambda^k_h)^{-1} \sum_{\tau = 1}^{k-1} \phi^{\tau}_h ({\phi^{\tau}_h}^\top w^{i,\bar{\pi}}_h - \Pe_h V^{i,\bar{\pi}}_{h+1}(x_h^{\tau}, a_h^{\tau}) + V^{i,k}_{h+1}(x^{\tau}_{h+1}))- w^{i,\bar{\pi}}_h  \\
&= (\Lambda^k_h)^{-1} \left(
\left(
\sum_{\tau = 1}^{k-1}
\phi^{\tau}_h(\phi^{\tau}_h)^\top
-\Lambda_h^k\right)w^{i,\bar{\pi}}_h \right. \\
&\quad \left.  
 + \sum_{\tau = 1}^{k-1} \phi^{\tau}_h \bigl (V^{i,k}_{h+1}(x^{\tau}_{h+1}) - \Pe_h V^{i,\bar{\pi}}_{h+1}(x_h^{\tau}, a_h^{\tau}) \bigr )\right) \\
&\overset{(b)}{=} (\Lambda^k_h)^{-1} \left(-\lambda w^{i,\bar{\pi}}_h + \sum_{\tau = 1}^{k-1} \phi^{\tau}_h (V^{i,k}_{h+1}(x^{\tau}_{h+1}) - \Pe_h V^{i,\bar{\pi}}_{h+1}(x_h^{\tau}, a_h^{\tau}))\right) \\
&= \underbrace{-\lambda (\Lambda^k_h)^{-1} w^{i,\bar{\pi}}_h}_{\textrm{(I)}} + 
\underbrace{(\Lambda^k_h)^{-1} \sum_{\tau = 1}^{k-1} \phi^{\tau}_h (V^{i,k}_{h+1}(x^{\tau}_{h+1}) - \Pe_h V^{i,k}_{h+1}(x_h^{\tau}, a_h^{\tau}))}_{\textrm{(II)}} \\
&\quad + \underbrace{(\Lambda^k_h)^{-1}\sum_{\tau = 1}^{k-1} \phi^{\tau}_h \Pe_h (V^{i,k}_{h+1} - V^{i,\bar{\pi}}_{h+1})(x_h^{\tau}, a_h^{\tau})}_{\textrm{(III)}}. %
\end{align*}
where (a) follows from the fact that,  for any $(x, a, h) \in \S \times \A \times [H]$, $Q^{i,\bar{\pi}}_h(x, a) :=  \langle \phi(x, a), w^{i,\bar{\pi}}_h\rangle = (r_h + \Pe_h V^{i,\bar{\pi}}_{h+1})(x, a) 
$ for some $w^{i,\bar{\pi}}_h\in\R^d$ (this follows from Proposition~\ref{prop:lin-Q} and the Bellman equation); and (b) follows from the definition of $\Lambda_h^k$. Since $\langle\phi(x, a), w^{i,k}_h\rangle - Q_h^{i,\bar{\pi}}(x, a)=\langle\phi(x, a), w^{i,k}_h-w^{i,\bar{\pi}}_h\rangle$ for any $(x,a)\in\S\times\A$, then we look to bound the inner product of each of the terms (I) -- (III) with the term $\phi(x, a)$.

Regarding the term (I),
\begin{multline*}
|\langle\phi(x,a),\textrm{(I)}\rangle|=|\langle \phi(x, a),\lambda (\Lambda^k_h)^{-1} w^{i,\bar{\pi}}_h\rangle| = |\lambda \langle (\Lambda^k_h)^{-1/2}\phi(x, a), (\Lambda^k_h)^{-1/2} w^{i,\bar{\pi}}_h\rangle|\\
\leq \lambda \norm{w_h^{i,\bar{\pi}}}_{ (\Lambda^k_h)^{-1}} \sqrt{\phi(x, a)^\top (\Lambda^k_h)^{-1}  \phi(x, a)}
\leq \sqrt{\lambda} \norm{w_h^{i,\bar{\pi}}} \sqrt{\phi(x, a)^\top (\Lambda^k_h)^{-1}  \phi(x, a)}
\end{multline*}
where the last inequality follows from $\norm{\,\cdot\,}_{(\Lambda_h^k)^{-1}}\leq \frac{1}{\sqrt{\lambda}}\norm{\,\cdot\,}$.

For the term (II), since the event $\mathcal{E}_i$ from Lemma~\ref{lem:stochastic_term} is given and $\lambda=1$, we directly obtain
\begin{multline*}
|\langle\phi(x,a),\textrm{(II)}\rangle|=\left|\left\langle \phi(x, a), (\Lambda^k_h)^{-1} \sum_{\tau = 1}^{k-1} \phi^{\tau}_h (V^{i,k}_{h+1}(x^{\tau}_{h+1}) - \Pe_h V^{i,k}_{h+1}(x_h^{\tau}, a_h^{\tau}))\right\rangle\right|
\\
\leq
\norm{\sum_{\tau = 1}^{k-1} \phi^{\tau}_h (V^{i,k}_{h+1}(x^{\tau}_{h+1}) - \Pe_h V^{i,k}_{h+1}(x_h^{\tau}, a_h^{\tau}))}_{(\Lambda^k_h)^{-1}}\norm{\phi(x,a)}_{(\Lambda^k_h)^{-1}}\\
\leq 
C dH\sqrt{\log ((c_\beta+1)dKH/\delta)} \sqrt{\phi(x, a)^\top (\Lambda^k_h)^{-1}  \phi(x, a)} 
\end{multline*}
where $C$ is an absolute constant independent of $c_\beta>0$.

For the term (III),
\begin{align*}
\langle\phi(x,a),\textrm{(III)}\rangle&=\left \langle \phi(x, a), (\Lambda^k_h)^{-1}\sum_{\tau = 1}^{k-1} \phi^{\tau}_h \Pe_h (V^{i,k}_{h+1} - V^{i,\bar{\pi}}_{h+1})(x_h^{\tau}, a_h^{\tau}) \right \rangle\\
&= \bigg \langle \phi(x, a), (\Lambda^k_h)^{-1}\sum_{\tau = 1}^{k-1} \phi^{\tau}_h (\phi^{\tau}_h)^\top \int_{\S} (V^{i,k}_{h+1} - V^{i,\bar{\pi}}_{h+1})(x') d \mu_h(x')\bigg\rangle\\
&\overset{\textrm{(a)}}{=} \underbrace{\bigg \langle \phi(x, a), \int_{\S} (V^{i,k}_{h+1} - V^{i,\bar{\pi}}_{h+1})(x') d \mu_h(x')\bigg \rangle}_{\textrm{(III.1)}}\\
&\quad 
\underbrace{-\lambda \bigg\langle \phi(x, a), (\Lambda^k_h)^{-1}\int_{\S} (V^{i,k}_{h+1} - V^{i,\bar{\pi}}_{h+1})(x') d \mu_h(x') \bigg\rangle}_{\textrm{(III.2)}}
\end{align*}
where (a) follows from the definition of $\Lambda_h^k$. We immediately see from our assumption on linear stochastic game that $\textrm{(III.1)}=\Pe_h (V^{i,k}_{h+1} - V^{i,\bar{\pi}}_{h+1})(x, a)$ and 
\begin{multline*}
|\textrm{(III.2)}|
\leq \lambda\norm{\int_\S (V^{i,k}_{h+1} - V^{i,\bar{\pi}}_{h+1})(x')d \mu_h(x')}_{(\Lambda_h^k)^{-1}} \sqrt{\phi(x, a)^\top (\Lambda^k_h)^{-1}  \phi(x, a)}\\
\leq \sqrt{\lambda}\norm{\int_\S (V^{i,k}_{h+1} - V^{i,\bar{\pi}}_{h+1})(x')d \mu_h(x')} \sqrt{\phi(x, a)^\top (\Lambda^k_h)^{-1}  \phi(x, a)}\\
\overset{\textrm{(a)}}{\leq} \sqrt{\lambda}2H\int_\S \norm{\mu_h(x')} dx'  \sqrt{\phi(x, a)^\top (\Lambda^k_h)^{-1}  \phi(x, a)}
\overset{(b)}{\leq} 2 H \sqrt{d\lambda} \sqrt{\phi(x, a)^\top (\Lambda^k_h)^{-1}  \phi(x, a)}
\end{multline*}
where (a) follows from the value functions being bounded, and (b) from the definiton of the linear MDP.

Finally, putting it all together with $\lambda=1$, we conclude that, 
\begin{multline*}
|\langle\phi(x, a), w^{i,k}_h\rangle - Q_h^{i,\bar{\pi}}(x, a)  -  \Pe_h (V^{i,k}_{h+1} - V^{i,\bar{\pi}}_{h+1})(x, a)| \\
\leq \left(\norm{w_h^{i,\bar{\pi}}}+CdH \sqrt{\log ((c_\beta+1)dKH/\delta)}+2H\sqrt{d}\right)\sqrt{\phi(x, a)^\top (\Lambda^k_h)^{-1}  \phi(x, a)}\\
\leq\left(4H\sqrt{d}+CdH \sqrt{\log ((c_\beta+1)dKH/\delta)}\right)\sqrt{\phi(x, a)^\top (\Lambda^k_h)^{-1}  \phi(x, a)}
\end{multline*}
where the last inequality follows from Proposition~\ref{prop:lin-Q}.

Now, from equation~\eqref{eq:upp_low_bound} in Lemma~\ref{lem:stochastic_term}, we have $\sqrt{\log ((c_\beta+1)dKH/\delta)}>1$ independently from $c_\beta>0$, and thus
$$
|\langle\phi(x, a), w^{i,k}_h\rangle - Q_h^{i,\bar{\pi}}(x, a)  -  \Pe_h (V^{i,k}_{h+1} - V^{i,\bar{\pi}}_{h+1})(x, a)| \leq \bar{C} dH\sqrt{\log ((c_\beta+1)dKH/\delta)} \sqrt{\phi(x, a)^\top (\Lambda^k_h)^{-1}  \phi(x, a)},
$$
for an absolute constant $\bar{C}= C+4$ independent of $c_{\beta}$.

Finally, to prove this lemma, we only need to show that there exists a choice of the absolute positive constant $c_\beta$ so that
$
\bar{C}\sqrt{\log ((c_\beta+1)dKH/\delta)}\leq c_\beta \sqrt{\iota}
$, which is equivalent to
\begin{equation} \label{eq:choice_beta_constant}
\bar{C}\sqrt{\iota + \log(c_\beta + 1)} \le c_\beta \sqrt{\iota}
\end{equation}
since $\sqrt{\log\left(\frac{(1+c_\beta)dKH}{\delta}\right)}=\sqrt{\log\left(\frac{dKH}{\delta}\right)+\log(1+c_\beta)}=\sqrt{\iota+\log(1+c_\beta)}$.

Two facts are known: 1) $\iota \in [\log(2), \infty)$ by its definition and $d\geq 2$; and 2) $\bar{C}$ is an absolute constant independent of $c_\beta$. 

Since we know we are looking for  $c_\beta>0$ and using the bound $\log(x)\leq x-1$ for any positive $x\in\R$, we conclude that proving the following equation implies~\eqref{eq:choice_beta_constant},
\begin{equation} \label{eq:choice_beta_constant_2}
\bar{C}\sqrt{\iota+c_\beta}\leq c_\beta\sqrt{\iota}.
\end{equation}
After some algebraic calculations, we can show that
\begin{equation}
\label{eq:c_beta_lower}
c_\beta \geq \frac{\bar{C}^2}{2\log(2)}+\frac{1}{2}\sqrt{\frac{\bar{C}^4}{(\log(2))^2}+4\bar{C}^2}    
\end{equation}
suffices. This finishes the proof.
\end{proof}

\end{document}